\documentclass{article}
\usepackage[mathcal]{euscript}
\usepackage{color}


\usepackage[final,nonatbib]{nips_2018}


\usepackage[utf8]{inputenc} 
\usepackage[T1]{fontenc}    
\usepackage{hyperref}       
\usepackage{url}            
\usepackage{booktabs}       
\usepackage{amsfonts}       
\usepackage{nicefrac}       
\usepackage{microtype}      
\usepackage{amsmath}
\usepackage{amsthm}
\usepackage{comment}
\usepackage{enumitem}
\bibliographystyle{plain}

\def\ddefloop#1{\ifx\ddefloop#1\else\ddef{#1}\expandafter\ddefloop\fi}
\def\ddef#1{\expandafter\def\csname c#1\endcsname{\ensuremath{\mathcal{#1}}}}
\ddefloop ABCDEFGHIJKLMNOPQRSTUVWXYZ\ddefloop
\def\ddef#1{\expandafter\def\csname s#1\endcsname{\ensuremath{\mathsf{#1}}}}
\ddefloop ABCDEFGHIJKLMNOPQRSTUVWXYZ\ddefloop
\def\E{\mathbf{E}}

\def\Reals{\mathbb{R}}
\def\argmin{\operatornamewithlimits{arg\,min}}

\def\deq{:=}
\def\wh#1{\widehat{#1}}

\def\d{{\mathrm d}}

\def\1{{\mathbf 1}}

\def\eps{\varrho}
\def\risk{R}
\def\diam{\mathsf{diam}}
\def\Rad{{\mathfrak R}}
\def\comp{\mathfrak C}

\newtheorem{definition}{Definition}
\newtheorem{remark}{Remark}
\newtheorem{proposition}{Proposition}
\newtheorem{theorem}{Theorem}
\newtheorem{lemma}{Lemma}
\newtheorem{assumption}{Assumption}
\newtheorem{corollary}{Corollary}

\title{Minimax statistical learning\\
with Wasserstein distances}

\author{
  Jaeho Lee \qquad Maxim Raginsky\\
  \texttt{\{jlee620, maxim\}@illinois.edu}\thanks{Department of Electrical and Computer Engineering and Coordinated Science Laboratory, University of Illinois, Urbana, IL 61801, USA. This work was supported in part by NSF grant nos.\ CIF-1527
388 and CIF-1302438, and in part by the NSF
CAREER award 1254041.} \\
}

\begin{document}

\maketitle

\begin{abstract}
As opposed to standard empirical risk minimization (ERM), \emph{distributionally robust optimization} aims to minimize the worst-case risk over a larger ambiguity set containing the original empirical distribution of the training data. In this work, we describe a minimax framework for statistical learning with ambiguity sets given by balls in Wasserstein space. In particular, we prove generalization bounds that involve the covering number properties of the original ERM problem. As an illustrative example, we provide generalization guarantees for transport-based domain adaptation problems where the Wasserstein distance between the source and target domain distributions can be reliably estimated from unlabeled samples.
\end{abstract}

\section{Introduction}

In the traditional paradigm of statistical learning \cite{vapnik}, we have a class $\cP$ of probability measures on a measurable \textit{instance space} $\cZ$ and a class $\cF$ of measurable functions $f: \cZ \to \Reals_+$. Each $f \in \cF$ quantifies the loss of some decision rule or a hypothesis applied to instances $z \in \cZ$, so, with a slight abuse of terminology, we will refer to $\cF$ as the \textit{hypothesis space}. The (expected) \textit{risk} of a hypothesis $f$ on instances generated according to $P$ is given by
\begin{align*}
	\risk(P,f) \deq \E_P[f(Z)] = \int_{\cZ} f(z) P(\d z).
\end{align*}
Given an $n$-tuple $Z_1,\ldots,Z_n$ of i.i.d.\ training examples drawn from an unknown $P \in \cP$, the objective is to find a hypothesis $\wh{f} \in \cF$ whose risk $\risk(P,\wh{f})$ is close to the minimum risk
\begin{align}\label{eq:minimum_risk}
	\risk^*(P,\cF) \deq \inf_{f \in \cF} \risk(P,f)
\end{align}
with high probability. Under suitable regularity assumptions, this objective can be accomplished via Empirical Risk Minimization (ERM) \cite{vapnik,koltchinskii}:
\begin{align}\label{eq:ERM}
	\risk(P_n,f) = \frac{1}{n}\sum_{i=1}^n f(Z_i) \longrightarrow \min, \,\, f \in \cF
\end{align}
where $P_n \deq \frac{1}{n}\sum_{i=1}^n \delta_{Z_i}$ is the empirical distribution of the training examples.

Recently, however, an alternative viewpoint has emerged, inspired by ideas from robust statistics and robust stochastic optimization. In this \textit{distributionally robust framework}, instead of solving the ERM problem \eqref{eq:ERM}, one aims to solve the minimax problem
\begin{align}\label{eq:EDRO}
	\sup_{Q \in \cA(P_n)} \risk(Q,f) \longrightarrow \min,\,\, f \in \cF
\end{align}
where $\cA(P_n)$ is an \textit{ambiguity set} containing the empirical distribution $P_n$ and, possibly, the unknown probability law $P$ either with high probability or almost surely. The ambiguity sets serve as a mechanism for compensating for the uncertainty about $P$ that inherently arises due to having only a finite number of samples to work with, and can be constructed in a variety of ways, e.g. via moment constraints \cite{farnia_tse}, $f$-divergence balls \cite{duchi_etal}, and Wasserstein balls \cite{esfahani_kuhn,gao_kleywegt,blanchet_etal}. However, with the exception of the recent work by Farnia and Tse \cite{farnia_tse}, the minimizer of \eqref{eq:EDRO} is still evaluated under the standard statistical risk minimization paradigm.

In this work, we instead study the scheme where the statistical risk minimization criterion \eqref{eq:minimum_risk} is replaced with the \textit{local minimax risk}
\begin{align*}
	\inf_{f \in \cF} \sup_{Q \in \cA(P)} \risk(Q,f)
\end{align*}
at $P$, where the ambiguity set $\cA(P)$ is taken to be a Wasserstein ball centered at $P$. As we will argue below, this change of perspective is natural when there is a possibility of \textit{domain drift}, i.e., when the learned hypothesis is evaluated on a distribution $Q$ which may be different from the distribution $P$ that was used to generate the training data.

The rest of this paper is organized as follows: In Section~\ref{sec:risk_bounds}, we formally present the notion of local minimax risk and discuss its relationship to the statistical risk, which allows us to assess the performance of minimax-optimal hypothesis in specific domains. We also provide an example to illustrate the role of ambiguity sets in rejecting nonrobust hypotheses.

In Section~\ref{sec:ERM}, we show that the hypothesis learned with the Empirical Risk Minimization (ERM) procedure based on the local minimax risk closely achieves the optimal local minimax risk. In particular, we provide a data-dependent bound on the generalization error, which behaves like the bound for ordinary ERM in the no-ambiguity regime (Theorem~\ref{thm:data_dep}), and excess risk bounds under uniform smoothness assumptions on $\cF$ (Theorem~\ref{thm:lipschitz}) and a less restrictive assumption that $\cF$ contains at least one smooth hypothesis (Theorem~\ref{thm:ERM}).

In Section~\ref{sec:DA}, we provide an alternative perspective on \textit{domain adaptation} based on the minimax statistical learning under the framework of Courty et al. \cite{courty_etal}, where the domain drift is due to an unknown transformation of the feature space that preserves the conditional distribution of the labels given the features. Completely bypassing the estimation of the transport map, we provide a proper excess risk bound that compares the risk of the learned hypothesis to the minimal risk achievable within the given hypothesis class on the target domain (Theorem~\ref{thm:courty}). To the best of our knowledge, all existing theoretical results on domain adaptation are stated in terms of the discrepancy between the best hypotheses on the source and on the target domains.

All proofs are deferred to the appendix.

\section{Local minimax risk with Wasserstein ambiguity sets} \label{sec:risk_bounds}

We assume that the instance space $\cZ$ is a Polish space (i.e., a complete separable metric space) with metric $d_\cZ$. We denote by $\cP(\cZ)$ the space of all Borel probability measures on $\cZ$, and by $\cP_p(\cZ)$ with $p \ge 1$ the space of all $P \in \cP(\cZ)$ with finite $p$th moments. The metric structure of $\cZ$ can be used to define a family of metrics on the spaces $\cP_p(\cZ)$ \cite{villani}:
\begin{definition}
	For $p \ge 1$, the {\em $p$-Wasserstein distance} between $P,Q \in \cP_p(\cZ)$ is
	\begin{align}\label{eq:p_Wass}
		W_p(P,Q) \deq \inf_{M(\cdot \times \cZ) = P \atop M(\cZ \times \cdot) = Q} \left(\E_M(d^p_{\cZ}(Z,Z')]\right)^{1/p},
	\end{align}
	where the infimum is taken over all {\em couplings} of $P$ and $Q$, i.e. probability measures $M$ on the product space $\cZ \times \cZ$ with the given marginals $P$ and $Q$.
\end{definition}
\begin{remark}{\em
Wasserstein distances arise in the problem of \textit{optimal transport}: for any coupling $M$ of $P$ and $Q$, the conditional distribution $M_{Z'|Z}$ can be viewed as a randomized policy for `transporting' a unit quantity of some material from a random location $Z \sim P$ to another location $Z'$, while satisfying the marginal constraint $Z' \sim Q$. If the cost of transporting a unit of material from $z \in \cZ$ to $z' \in \cZ$ is given by $d_\cZ^p(z,z')$, then $W_p^p(P,Q)$ is the minimum expected tranport cost.
}\end{remark}
We now consider a learning problem $(\cP,\cF)$ with $\cP = \cP_p(\cZ)$ for some $p \ge 1$. Following \cite{esfahani_kuhn,shafieezadeh_esfahani_kuhn,gao_kleywegt}, we let the ambiguity set $\cA(P)$ be the $p$-Wasserstein ball of radius $\eps \ge 0$ centered at $P$:
	$$
	\cA(P) = B^W_{\eps,p}(P) \deq \left\{ Q \in \cP_p(Z) : W_p(P,Q) \le \eps \right\},
	$$
where the radius $\eps > 0$ is a tunable parameter. We then define the \textit{local worst-case risk} of $f$ at $P$,
\begin{align*}
	\risk_{\eps,p}(P,f) &\deq \sup_{Q \in B^W_{\eps,p}(P)} \risk(Q,f),
\end{align*}
and the \textit{local minimax risk} at $P$:
\begin{align*}
	\risk^*_{\eps,p}(P,\cF) \deq \inf_{f \in \cF} \risk_{\eps,p}(P,f).
\end{align*}

\subsection{Local worst-case risk vs.\ statistical risk} \label{ssec:prop_local}
We give a couple of inequalities relating the local worst-case (or local minimax) risks and the usual statistical risks, which will be useful in Section~\ref{sec:DA}. The first one is a simple consequence of the Kantorovich duality theorem from the theory of optimal transport \cite{villani}:
\begin{proposition}\label{prop:Lipschitz}
	Suppose that $f$ is $L$-Lipschitz, i.e., $|f(z)-f(z')|\le L d_\cZ(z,z')$ for all $z,z' \in \cZ$. Then, for any $Q \in B^W_{\eps,p}(P)$,
	\begin{align*}
		\risk(Q,f) \le \risk_{\eps,p}(P,f) \le \risk(Q,f) + 2L\eps.
	\end{align*}
\end{proposition}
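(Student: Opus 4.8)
The plan is to handle the two inequalities separately, since they are of quite different character. The left-hand inequality $\risk(Q,f) \le \risk_{\eps,p}(P,f)$ is immediate: by definition $\risk_{\eps,p}(P,f) = \sup_{Q' \in B^W_{\eps,p}(P)} \risk(Q',f)$, and $Q$ is itself an element of $B^W_{\eps,p}(P)$, so $\risk(Q,f)$ is one of the quantities over which the supremum is taken. All the real content is therefore in the right-hand inequality.

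For the right-hand inequality, I would fix an arbitrary $Q' \in B^W_{\eps,p}(P)$ and show that $\risk(Q',f) \le \risk(Q,f) + 2L\eps$; taking the supremum over $Q'$ then gives the claim. The point is to bound $|\risk(Q',f) - \risk(Q,f)| = |\E_{Q'}[f] - \E_Q[f]|$ by a multiple of the Wasserstein distance between $Q$ and $Q'$. Since $W_p$ is a genuine metric on $\cP_p(\cZ)$ and both $Q$ and $Q'$ lie in the $\eps$-ball about $P$, the triangle inequality gives $W_p(Q,Q') \le W_p(Q,P) + W_p(P,Q') \le 2\eps$. Applying Jensen's inequality to an optimal (or near-optimal) coupling shows $W_1 \le W_p$ for $p \ge 1$, hence $W_1(Q,Q') \le 2\eps$ as well. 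Now I would invoke Kantorovich--Rubinstein duality, $W_1(Q,Q') = \sup\{\E_Q[g] - \E_{Q'}[g] : g \text{ is } 1\text{-Lipschitz}\}$: since $f$ is $L$-Lipschitz, $f/L$ is admissible in this supremum, whence $|\E_Q[f] - \E_{Q'}[f]| \le L\, W_1(Q,Q') \le 2L\eps$. Rearranging yields $\risk(Q',f) \le \risk(Q,f) + 2L\eps$, and taking the supremum over $Q' \in B^W_{\eps,p}(P)$ completes the argument.

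There is no substantial obstacle here; the proof is short. The only points needing a word of care are integrability and finiteness: an $L$-Lipschitz $f$ has at most linear growth, and $Q,Q' \in \cP_p(\cZ) \subseteq \cP_1(\cZ)$ have finite first moments, so $\E_Q[f]$ and $\E_{Q'}[f]$ are finite and $W_1(Q,Q') < \infty$, which is what makes the duality applicable. One should also make sure the triangle inequality for $W_p$ and the monotonicity $W_1 \le W_p$ are cited to a standard reference such as \cite{villani}. If one prefers to avoid invoking Kantorovich--Rubinstein explicitly, the same estimate follows directly by choosing a coupling $M$ of $Q$ and $Q'$ that is $\delta$-optimal for $W_p(Q,Q')$, bounding $|\E_M[f(Z)-f(Z')]| \le L\,\E_M[d_\cZ(Z,Z')] \le L\big(\E_M[d^p_\cZ(Z,Z')]\big)^{1/p} \le L(2\eps+\delta)$ via the Lipschitz property and Jensen, and letting $\delta \downarrow 0$; but the duality route is the cleanest.
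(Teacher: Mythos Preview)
Your proposal is correct and follows essentially the same approach as the paper: invoke the Kantorovich dual representation of $W_1$ to bound $|\E_Q[f]-\E_{Q'}[f]|$ by $L\,W_1(Q,Q')$, use the triangle inequality to get $W_p(Q,Q') \le 2\eps$, and reduce $p>1$ to $p=1$ via $W_1 \le W_p$. Your write-up is in fact slightly more careful than the paper's, since you explicitly justify the left inequality and the finiteness of the relevant integrals.
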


As an example, consider the problem of binary classification with hinge loss: $\cZ = \cX \times \cY$, where $\cX$ is an arbitrary feature space, $\cY = \{-1,+1\}$, and the hypothesis space $\cF$ consists of all functions of the form $f(z)=f(x,y)=\max\{0,1-yf_0(x)\}$, where $f_0 : \cX \to \Reals$ is a candidate predictor. Then, since the function $u \mapsto \max\{0,1-u\}$ is Lipschitz-continuous with constant $1$, we can write
\begin{align*}
	|f(x,y)-f(x',y')| &\le |yf_0(x)-y'f_0(x')| \le 2\|f_0\|_\cX\1\{y\neq y'\} + |f_0(x)-f_0(x')|,
\end{align*}
where $\|f_0\|_\cX \deq \sup_{x \in \cX}|f_0(x)|$. If $\|f_0\|_\cX < \infty$ and if $f_0$ is $L_0$-Lipschitz with respect to some metric $d_\cX$ on $\cX$, then it follows that $f$ is Lipschitz with constant $\max\{2\|f_0\|_\cX,L_0\}$ with respect to the product metric
$$
d_\cZ(z,z') = d_\cZ((x,y),(x',y')) \deq d_\cX(x,x')+\1\{y \neq y'\}.
$$
Next we consider the case when the function $f$ is smooth but not Lipschitz-continuous. Since we are working with general metric spaces that may lack an obvious differentiable structure, we need to first introduce some concepts from metric geometry \cite{ambrosio}. A metric space $(\cZ,\d_\cZ)$ is a \textit{geodesic space} if for every two points $z,z' \in \cZ$ there exists a path $\gamma : [0,1] \to \cZ$, such that $\gamma(0)=z$, $\gamma(1)=z'$, and $d_\cZ(\gamma(s),\gamma(t))=(t-s)\cdot d_\cZ(\gamma(0),\gamma(1))$ for all $0 \le s \le t \le 1$ (such a path is called a \textit{constant-speed geodesic}). A functional $F : \cZ \to \Reals$ is \textit{geodesically convex} if for any pair of points $z,z' \in \cZ$ there is a constant-speed geodesic $\gamma$, so that
$$
F(\gamma(t)) \le (1-t)F(\gamma(0))+tF(\gamma(1)) = (1-t)F(z)+tF(z'), \qquad \forall t \in [0,1].
$$
An \textit{upper gradient} of a Borel function $f : \cZ \to \Reals$ is a functional $G_f : \cZ \to \Reals_+$, such that for any pair of points $z,z' \in \cZ$ there exists a constant-speed geodesic $\gamma$ obeying
\begin{align}\label{eq:upper_gradient}
	|f(z')-f(z)| \le \int^1_0 G_f(\gamma(t))\d t \cdot d_\cZ(z,z').
\end{align}
With these definitions at hand, we have the following:
\begin{proposition}\label{prop:geoconv} Suppose that $f$ has a geodesically convex upper gradient $G_f$. Then
	\begin{align*}
		\risk(Q,f) \le \risk_{\eps,p}(P,f) \le \risk(Q,f) + 2\eps\sup_{Q \in B^W_{\eps,p}(P)} \| G_f(Z)\|_{L^q(Q)} ,
	\end{align*}
	where $1/p + 1/q = 1$, and $\|\cdot\|_{L^q(Q)} \deq \left(\E_Q|\cdot|^q\right)^{1/q}$.
\end{proposition}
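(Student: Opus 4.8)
The plan is to establish the two inequalities separately; the left one is immediate, and essentially all the work is in the right one.

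For the lower bound $\risk(Q,f)\le\risk_{\eps,p}(P,f)$: since $Q\in B^W_{\eps,p}(P)$ by hypothesis, $\risk(Q,f)$ is among the quantities over which the supremum defining $\risk_{\eps,p}(P,f)$ is taken, so this is trivial.

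For the upper bound, fix the reference measure $Q\in B^W_{\eps,p}(P)$ and let $Q'$ be an arbitrary competitor in $B^W_{\eps,p}(P)$; it suffices to bound $\risk(Q',f)-\risk(Q,f)$ by $2\eps\sup_{Q''\in B^W_{\eps,p}(P)}\|G_f(Z)\|_{L^q(Q'')}$ and then take the supremum over $Q'$. First, the triangle inequality for $W_p$ gives $W_p(Q,Q')\le W_p(Q,P)+W_p(P,Q')\le 2\eps$. Since $\cZ$ is Polish, there is an optimal coupling $M$ of $Q$ and $Q'$, i.e.\ one with $(\E_M[d_\cZ^p(Z,Z')])^{1/p}=W_p(Q,Q')$, and we may write $\risk(Q',f)-\risk(Q,f)=\E_M[f(Z')-f(Z)]$.

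The heart of the argument is a pointwise estimate on $f(z')-f(z)$. Given $z,z'\in\cZ$, pick a constant-speed geodesic $\gamma$ joining them that witnesses the upper-gradient inequality \eqref{eq:upper_gradient}; because $G_f$ is geodesically convex we may take $\gamma$ so that in addition $G_f(\gamma(t))\le(1-t)G_f(z)+tG_f(z')$ for all $t\in[0,1]$. Integrating this convexity bound over $t$ and feeding the result into \eqref{eq:upper_gradient} yields $f(z')-f(z)\le\tfrac12\big(G_f(z)+G_f(z')\big)\,d_\cZ(z,z')$. Substituting $Z,Z'$ and taking $\E_M$, Hölder's inequality with conjugate exponents $q$ and $p$ gives
\begin{align*}
\risk(Q',f)-\risk(Q,f)\le \tfrac12\,\big\|G_f(Z)+G_f(Z')\big\|_{L^q(M)}\cdot\big\|d_\cZ(Z,Z')\big\|_{L^p(M)}.
\end{align*}
The second factor equals $W_p(Q,Q')\le 2\eps$ by the choice of $M$; the first is at most $\|G_f(Z)\|_{L^q(Q)}+\|G_f(Z')\|_{L^q(Q')}$ by Minkowski's inequality (the marginals of $M$ being $Q$ and $Q'$), hence at most $2\sup_{Q''\in B^W_{\eps,p}(P)}\|G_f(Z)\|_{L^q(Q'')}$. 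Multiplying the three bounds gives the claim.

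The one delicate point is the step in the pointwise estimate: the definitions of upper gradient and of geodesic convexity each only assert the existence of \emph{some} constant-speed geodesic with the stated property, so one must argue (or assume, as is automatic when geodesics between points are unique) that a single geodesic can be chosen to serve both roles — equivalently, one works with the stronger notion of an upper gradient controlling $|f(z')-f(z)|$ along \emph{every} rectifiable curve. Everything else — existence of the optimal coupling, measurability of $G_f$, and the two applications of Hölder/Minkowski — is routine.
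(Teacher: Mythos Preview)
Your argument is correct and follows essentially the same route as the paper: take an optimal coupling $M$ of $Q$ and $Q'$, combine the upper-gradient inequality with geodesic convexity to get the pointwise bound $f(z')-f(z)\le\tfrac12(G_f(z)+G_f(z'))\,d_\cZ(z,z')$, then apply H\"older and Minkowski exactly as you do. The paper makes the same implicit assumption you flag about a single geodesic witnessing both properties, so your caveat is well placed rather than a gap.
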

Consider the setting of \textit{regression with quadratic loss}: let $\cX$ be a convex subset of $\Reals^d$, let $\cY = [-B,B]$ for some $0<B<\infty$, and equip $\cZ = \cX \times \cY$ with the Euclidean metric
	\begin{align}
		d_\cZ(z,z') = \sqrt{\|x-x'\|^2_2 + |y-y'|^2}, \quad z = (x,y),z' = (x',y'). \label{eq:euclidean_metric}
	\end{align}
Suppose that the functions $f \in \cF$ are of the form $f(z)=f(x,y)=(y-h(x))^2$ with $h \in C^1(\Reals^d,\Reals)$, such that $\|h\|_\cX  \le M < \infty$ and $\| \nabla h(x)\|_2 \le L \|x\|_2$ for some $0 < L < \infty$. Then Proposition~\ref{prop:geoconv} leads to the following:	

\begin{proposition}\label{prop:regression_geoconv}
	\begin{align*}
		\risk(Q,f) \le \risk_{\eps,2}(P,f) \le \risk(Q,f) + 4\eps(B+M) \Big(1+L\sup_{Q \in B^W_{\eps,2}(P)} \sigma_{Q,X} \Big),
	\end{align*}
	where $\sigma_{Q,X} \deq \E_Q\|X\|_2$ for $Z=(X,Y) \sim Q$.
\end{proposition}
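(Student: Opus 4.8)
The plan is to exhibit a geodesically convex upper gradient of $f$ and then invoke Proposition~\ref{prop:geoconv} with $p=q=2$. First I would record the relevant geometric facts. Since $\cZ = \cX \times \cY$ is a convex subset of $\Reals^{d+1}$ carrying the Euclidean metric \eqref{eq:euclidean_metric}, the constant-speed geodesic joining $z=(x,y)$ and $z'=(x',y')$ is the segment $\gamma(t)=(1-t)z+tz'$, which stays inside $\cZ$ by convexity of $\cX$ and of $\cY=[-B,B]$; and $f(x,y)=(y-h(x))^2$ is $C^1$ on $\Reals^d\times\Reals\supseteq\cZ$ because $h\in C^1$. Integrating $\frac{\d}{\d t}f(\gamma(t))=\langle\nabla f(\gamma(t)),z'-z\rangle$ over $[0,1]$ and applying Cauchy--Schwarz then shows that $z\mapsto\|\nabla f(z)\|_2$ satisfies \eqref{eq:upper_gradient} along these segments, hence is an upper gradient of $f$; and so is any Borel function dominating it pointwise.

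Next I would compute the gradient explicitly. Writing $z=(x,y)$, we have $\partial_y f(z)=2(y-h(x))$ and $\nabla_x f(z)=-2(y-h(x))\nabla h(x)$, so
\begin{align*}
	\|\nabla f(z)\|_2 = 2\,|y-h(x)|\,\sqrt{1+\|\nabla h(x)\|_2^2}.
\end{align*}
Using $|y-h(x)|\le|y|+\|h\|_\cX\le B+M$ together with $\sqrt{1+\|\nabla h(x)\|_2^2}\le 1+\|\nabla h(x)\|_2\le 1+L\|x\|_2$, I obtain the pointwise bound $\|\nabla f(z)\|_2\le G_f(z)$, where
\begin{align*}
	G_f(x,y)\deq 2(B+M)\big(1+L\|x\|_2\big).
\end{align*}
By the first paragraph $G_f$ is an upper gradient of $f$; and since it is an affine function of the convex map $(x,y)\mapsto\|x\|_2$ on the convex set $\cZ$, it is convex, i.e.\ geodesically convex with respect to $d_\cZ$.

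Applying Proposition~\ref{prop:geoconv} with $p=q=2$ then gives, for every $Q\in B^W_{\eps,2}(P)$,
\begin{align*}
	\risk(Q,f)\le\risk_{\eps,2}(P,f)\le\risk(Q,f)+2\eps\sup_{Q\in B^W_{\eps,2}(P)}\|G_f(Z)\|_{L^2(Q)},
\end{align*}
and it remains only to estimate the last term. By Minkowski's inequality,
\begin{align*}
	\|G_f(Z)\|_{L^2(Q)} = 2(B+M)\,\big\|\,1+L\|X\|_2\,\big\|_{L^2(Q)} \le 2(B+M)\big(1+L(\E_Q\|X\|_2^2)^{1/2}\big) = 2(B+M)\big(1+L\,\sigma_{Q,X}\big);
\end{align*}
pulling the constant through the supremum produces the claimed bound $\risk(Q,f)+4\eps(B+M)\big(1+L\sup_{Q\in B^W_{\eps,2}(P)}\sigma_{Q,X}\big)$.

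I expect the main (and essentially only) obstacle to be the choice of upper gradient: one cannot take $G_f=\|\nabla f\|_2$, since $z\mapsto|y-h(x)|\sqrt{1+\|\nabla h(x)\|_2^2}$ is a product of two functions and need not be geodesically convex. Replacing it by the convex majorant $2(B+M)(1+L\|x\|_2)$ --- which is exactly where the hypotheses $\|h\|_\cX\le M$ and $\|\nabla h(x)\|_2\le L\|x\|_2$ enter --- is what makes Proposition~\ref{prop:geoconv} applicable, at the price of the constants $B+M$ and $L$ in the final bound; everything else is routine.
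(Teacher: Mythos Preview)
Your argument is essentially the same as the paper's: identify the convex majorant $G_f(x,y)=2(B+M)(1+L\|x\|_2)$ of $\|\nabla f\|_2$ on the convex set $\cZ\subset\Reals^{d+1}$, then apply Proposition~\ref{prop:geoconv} with $p=q=2$. One small slip in your last display (which the paper's proof also commits): Minkowski yields $\|G_f(Z)\|_{L^2(Q)}\le 2(B+M)\big(1+L(\E_Q\|X\|_2^2)^{1/2}\big)$, whereas the statement defines $\sigma_{Q,X}\deq\E_Q\|X\|_2$; your equality $(\E_Q\|X\|_2^2)^{1/2}=\sigma_{Q,X}$ is therefore not justified, and Jensen goes the wrong way to repair it. The bound holds as stated if $\sigma_{Q,X}$ is read as the $L^2$-norm of $\|X\|_2$.
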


\subsection{An illustrative example: $\varrho$ as an exploratory budget} \label{ssec:illustrative}
Before providing formal theoretical guarantees for ERM based on the local minimax risk $R_{\varrho,p}(P_n,f)$ in Section~\ref{sec:ERM}, we give a stylized yet insightful example to illustrate the key difference between the ordinary ERM and the local minimax ERM. In a nutshell, the local minimax ERM utilizes the Wasserstein radius $\varrho$ as an \textit{exploratory budget} to reject hypotheses overly sensitive to domain drift.

Consider $Z \sim \text{Unif}[0,1] =: P$ on data space $\cZ = [0,2]$, along with the hypothesis class $\cF$ with only two hypotheses:
\begin{align*}
	f_0(z) = 1, \;\;\; f_1(z) = \begin{cases} 0, & z \in [0,1)\\ \alpha, &z \in [1,2] \end{cases}
\end{align*}
for some $\alpha \gg 1$. Notice that, if the training data are drawn from $Z$, the ordinary ERM will always return $f_1$, the hypothesis that is not robust against small domain drifts, while we are looking for a structured procedure that will return $f_0$, a hypothesis that works well for probability distributions `close' to the data-generating distribution $\text{Unif}[0,1]$.

The success of minimax learning depends solely on the ability to \textit{transport} some weight from a nearby training sample to $1$, the region where nonrobust $f_1$ starts to perform poorly. Specifically, the minimax learning is `successful' when $\risk_{\varrho,p}(P_n,f_0) = 1$ is smaller than $\risk_{\varrho,p}(P_n,f_1) \approx \alpha \varrho^p / (1-\max Z_i)^p$, which happens with probability $1 - (1-\varrho \alpha^{1/p})^n$.

We make following key observations:
\begin{itemize}[leftmargin=*]
	\item While smaller $\varrho$ leads to the smaller nontrivial excess risk $\risk_{\varrho,p}(P,f_1) - \risk_{\varrho,p}(P,f_0)$, it also leads to a \textit{slower} decay of error probability. As a result, for a given $\varrho$, we can come up with a hypothesis class maximizing the excess risk at target $\varrho$ with excess risk behaving roughly as $\varrho^{-p^2/(p+1)}$ without affecting the Rademacher average of the class (see supplementary Appendix~\ref{sec:ill_example} for details).
	\item It is possible to guarantee smooth behavior of the ERM hypothesis without having uniform smoothness assumptions on $\cF$; if there exists a single smooth hypothesis $f_0$, it can be used as a baseline comparison to reject nonsmooth hypotheses. We build on this idea in Section~\ref{ssec:ERM}.
\end{itemize}

\section{Guarantees for empirical risk minimization} \label{sec:ERM}

Let $Z_1,\ldots,Z_n$ be an $n$-tuple of i.i.d.\ training examples drawn from $P$. In this section, we analyze the performance of the \textit{local minimax ERM} procedure
\begin{align}\label{eq:WERM}
	\wh{f} \deq \argmin_{f \in \cF} \risk_{\eps,p}(P_n,f).
\end{align}
The following strong duality result due to Gao and Kleywegt \cite{gao_kleywegt} will be instrumental:
\begin{proposition}\label{prop:duality}
	For any upper semicontinuous function $f : \cZ \to \Reals$ and for any $Q \in \cP_p(\cZ)$,
	\begin{align}
		\risk_{\eps,p}(Q,f) = \min_{\lambda \ge 0} \left\{\lambda \eps^p + \E_{Q}[\varphi_{\lambda,f}(Z)]\right\}, \label{eq:strong_dual}
	\end{align}
	where $\varphi_{\lambda,f}(z) \deq \sup_{z' \in \cZ}\left\{ f(z') - \lambda \cdot d^p_\cZ(z,z')\right\}$.
\end{proposition}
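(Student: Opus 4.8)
The plan is to read \eqref{eq:strong_dual} as Lagrangian duality for an infinite-dimensional linear program over couplings: the ``$\le$'' direction is weak duality and is straightforward, while the reverse inequality together with attainment of the minimum over $\lambda$ is the substantive part. Throughout one tacitly assumes $f$ grows at most like $d^p_\cZ(z_0,\cdot)$; otherwise $\varphi_{\lambda,f}$ fails to be $Q$-integrable and both sides of \eqref{eq:strong_dual} are $+\infty$.

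\textbf{Reformulation and weak duality.} Since $W_p(Q,P)\le\eps$ holds exactly when some coupling $M$ of $Q$ and $P$ satisfies $\E_M[d^p_\cZ(Z,Z')]\le\eps^p$, one has $\risk_{\eps,p}(Q,f)=\sup\{\E_M[f(Z')] : M(\cdot\times\cZ)=Q,\ \E_M[d^p_\cZ(Z,Z')]\le\eps^p\}$. Disintegrating $M(\d z,\d z')=Q(\d z)\,M_z(\d z')$ along its first marginal exhibits the right-hand side as a linear program over the Markov kernel $z\mapsto M_z$, subject to the single linear inequality $\int\!\int d^p_\cZ(z,z')\,M_z(\d z')\,Q(\d z)\le\eps^p$. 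Introducing a multiplier $\lambda\ge0$, for any feasible $M$,
\[
\E_M[f(Z')] \ \le\ \lambda\eps^p + \E_M\big[f(Z') - \lambda\, d^p_\cZ(Z,Z')\big] \ \le\ \lambda\eps^p + \E_Q[\varphi_{\lambda,f}(Z)],
\]
the first inequality by feasibility and $\lambda\ge0$, the second by conditioning on $Z$ and using the pointwise bound $\int\big(f(z')-\lambda d^p_\cZ(z,z')\big)M_z(\d z')\le\varphi_{\lambda,f}(z)$. Taking $\sup$ over feasible $M$ and then $\inf$ over $\lambda\ge0$ gives ``$\le$''.

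\textbf{Strong duality and attainment.} For the reverse inequality I would construct a near-optimal primal coupling matched to an optimal multiplier. The function $g(\lambda):=\lambda\eps^p+\E_Q[\varphi_{\lambda,f}(Z)]$ is convex in $\lambda$ (an integral of suprema of affine functions of $\lambda$), lower semicontinuous (Fatou), and, when $\eps>0$, satisfies $g(\lambda)\to+\infty$ as $\lambda\to\infty$, since $\varphi_{\lambda,f}(z)\downarrow f(z)$ (using upper semicontinuity of $f$) while $\lambda\eps^p\to\infty$; hence the infimum is attained at some $\lambda^\star$, which is why one may write $\min$. Given $\lambda^\star$, upper semicontinuity of $f$ makes $z'\mapsto f(z')-\lambda^\star d^p_\cZ(z,z')$ upper semicontinuous, so for each $\delta>0$ a measurable selection theorem (Kuratowski--Ryll-Nardzewski / von Neumann--Aumann) produces a Borel map $T_\delta$ with $f(T_\delta(z))-\lambda^\star d^p_\cZ(z,T_\delta(z))\ge\varphi_{\lambda^\star,f}(z)-\delta$. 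The coupling $M^\star(\d z,\d z'):=Q(\d z)\,\delta_{T_\delta(z)}(\d z')$ then satisfies $\E_{M^\star}[f(Z')]-\lambda^\star\E_{M^\star}[d^p_\cZ(Z,Z')]\ge g(\lambda^\star)-\lambda^\star\eps^p-\delta$, and optimality of $\lambda^\star$ pins down $\E_{M^\star}[d^p_\cZ(Z,Z')]$ at $\eps^p$ up to $O(\delta)$ (a complementary-slackness/first-order condition), so after mixing $M^\star$ with the identity coupling $Q(\d z)\,\delta_z(\d z')$ to restore exact feasibility one obtains a feasible coupling achieving at least $g(\lambda^\star)-O(\delta)$; letting $\delta\downarrow0$ closes the gap. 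The degenerate case $\eps=0$ is immediate: the ball is $\{Q\}$, the left side equals $\E_Q[f]$, and $\inf_\lambda\E_Q[\varphi_{\lambda,f}(Z)]=\E_Q[f]$ by monotone convergence as $\lambda\to\infty$.

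\textbf{The main obstacle.} The crux is the strong-duality step: $\cZ$ is merely Polish, so the feasible set of couplings need not be weakly compact and Sion's minimax theorem cannot be invoked directly. One circumvents this either by truncation (restrict $z'$ to large balls, use the moment bound for tightness and Prokhorov's theorem, then remove the truncation) or, as above, by the direct construction of a primal optimizer through a measurable selection of approximate maximizers of $f(z')-\lambda^\star d^p_\cZ(z,z')$ --- which is precisely where upper semicontinuity of $f$ enters --- with feasibility controlled via complementary slackness at $\lambda^\star$. The accompanying measurability checks (e.g.\ joint measurability of $(z,\lambda)\mapsto\varphi_{\lambda,f}(z)$ and measurability of $z\mapsto T_\delta(z)$) are routine but must be carried out with care; this is essentially the content of the argument in \cite{gao_kleywegt}.
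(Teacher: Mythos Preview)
The paper does not prove this proposition; it is quoted verbatim as a result of Gao and Kleywegt \cite{gao_kleywegt} and used as a black box. So there is no in-paper argument to compare against --- your sketch already goes further than the authors do.

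On the content of your sketch: the weak-duality half is correct and standard. The strong-duality half has the right architecture (coercivity and lower semicontinuity of $g$ give attainment at some $\lambda^\star$ when $\eps>0$; then build a primal coupling from maximizers of $z'\mapsto f(z')-\lambda^\star d^p_\cZ(z,z')$), but the sentence ``optimality of $\lambda^\star$ pins down $\E_{M^\star}[d^p_\cZ(Z,Z')]$ at $\eps^p$ up to $O(\delta)$'' does not follow from what precedes it. A $\delta$-approximate selector $T_\delta$ only yields
\[
\E_Q[f(T_\delta(Z))] \ \ge\ g(\lambda^\star)-\delta+\lambda^\star\big(\E_Q[d^p_\cZ(Z,T_\delta(Z))]-\eps^p\big),
\]
and if the realized transport cost falls strictly below $\eps^p$ this lower bound can be much worse than $g(\lambda^\star)$; mixing with the identity coupling only repairs an \emph{overshoot} of the cost, not an undershoot. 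What is actually needed is the first-order condition $\eps^p\in -\partial_\lambda\E_Q[\varphi_{\lambda,f}(Z)]\big|_{\lambda=\lambda^\star}$, which via an envelope/Danskin argument shows that among the \emph{exact} maximizers of $f(\cdot)-\lambda^\star d^p_\cZ(z,\cdot)$ one can measurably select so that the expected cost equals $\eps^p$ exactly (with a separate, easier treatment when $\lambda^\star=0$). That selection-with-prescribed-cost step is precisely the nontrivial core of \cite{gao_kleywegt}, as you yourself note; the point is only that it cannot be replaced by the $\delta$-approximation-plus-mixing shortcut in your middle paragraph.
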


\subsection{Data-dependent bound on generalization error} \label{ssec:data_dep}
We begin by imposing standard regularity assumptions (see, e.g., \cite{cucker_zhou}) which allow us to invoke concentration-of-meausre results for empirical processes.
\begin{assumption}\label{as:bounded_Z} The instance space $\cZ$ is bounded: $\diam(\cZ) \deq \sup_{z,z' \in \cZ} d_\cZ(z,z') < \infty$.
\end{assumption}
\begin{assumption}\label{as:bounded_F} The functions in $\cF$ are upper semicontinuous and uniformly bounded: $0 \le f(z) \le M < \infty$ for all $f \in \cF$ and $z \in \cZ$.
\end{assumption}
As a complexity measure of the hypothesis class $\cF$, we use the \textit{entropy integral} \cite{talagrand}
\begin{align*}
	\comp(\cF) \deq \int_{0}^{\infty}\sqrt{\log \cN(\cF,\|\cdot\|_{\infty},u)}  \d u,
\end{align*}
where $\cN(\cF,\|\cdot\|_\infty,\cdot)$ denotes the covering number of $\cF$ in the uniform metric $\|f - f'\|_{\infty} = \sup_{z\in \cZ}|f(z) - f'(z)|$.

The benefits of using the entropy integral $\comp(\cF)$ instead of usual complexity measures such as Rademacher or Gaussian complexity \cite{bartlettmendelson} are twofold: (1) $\comp(\cF)$ takes into account the behavior of hypotheses outside the support of the data-generating distribution $P$, and thus can be applied for the assessment of local worst-case risk; (2) Rademacher complexity of $\varphi_{\lambda,f}$ can be upper-bounded naturally via $\comp(\cF)$ and the covering number of a suitable bounded subset of $[0,\infty)$.

We are now ready to give our data-dependent bound on $\risk_{\varrho,p}(P,f)$:
\begin{theorem}\label{thm:data_dep}
	For any $\cF, P$ satisfying Assumptions \ref{as:bounded_Z}--\ref{as:bounded_F} and for any $t > 0$,
	\begin{align*}
		\mathbf{P}\bigg( \exists f \in \cF : \risk_{\varrho,p}(P,f) > &\min_{\lambda \geq 0}\left\{(\lambda+1)\varrho^p + \E_{P_n}[\varphi_{\lambda,f}(Z)] + \frac{M\sqrt{\log(\lambda + 1)}}{\sqrt{n}}\right\} \nonumber \\
		&+ \frac{24\comp(\cF)}{\sqrt{n}} + \frac{Mt}{\sqrt{n}} \bigg) \leq 2\exp(-2t^2)
	\end{align*}
	and
	\begin{align*}
			\mathbf{P}\bigg(\exists f \in \cF~:~ \risk_{\varrho,p}(P_n,f) > &\min_{\lambda \geq 0} \left\{(\lambda + 1)\varrho^p + \E_{P}\left[\varphi_{\lambda,f}(Z)\right] + \frac{M\sqrt{\log (\lambda + 1)}}{\sqrt{n}}\right\}\nonumber\\
			&+ \frac{24\comp(\cF)}{\sqrt{n}} + \frac{Mt}{\sqrt{n}} \bigg) \leq 2\exp(-2t^2).
\end{align*}
\end{theorem}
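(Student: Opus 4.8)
The plan is to combine the strong-duality formula of Proposition~\ref{prop:duality} with a uniform law of large numbers for the family of dualized losses $\{\varphi_{\lambda,f}: f\in\cF,\ \lambda\ge0\}$, the chief difficulty being that the Lagrange multiplier $\lambda$ ranges over all of $[0,\infty)$. Three elementary facts about $\varphi_{\lambda,f}$ organize the argument. First, taking $z'=z$ in the supremum and using $0\le f\le M$ gives $0\le f(z)\le\varphi_{\lambda,f}(z)\le M$, so the whole family lies in $[0,M]^\cZ$. Second, $\lambda\mapsto\varphi_{\lambda,f}(z)$ is non-increasing, since a larger $\lambda$ only subtracts more inside the supremum. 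Third, $f\mapsto\varphi_{\lambda,f}$ is $1$-Lipschitz in the uniform norm, i.e.\ $\|\varphi_{\lambda,f}-\varphi_{\lambda,f'}\|_\infty\le\|f-f'\|_\infty$, because shifting the objective of a supremum pointwise by at most $\|f-f'\|_\infty$ shifts the value by at most $\|f-f'\|_\infty$. Using the second fact, fix a real $\lambda\ge0$ and put $k\deq\lceil\lambda\rceil\le\lambda+1$: Proposition~\ref{prop:duality} (applicable at $P\in\cP_p(\cZ)$ with $f$ upper semicontinuous by Assumption~\ref{as:bounded_F}) gives $\risk_{\varrho,p}(P,f)\le k\varrho^p+\E_P[\varphi_{k,f}]\le(\lambda+1)\varrho^p+\E_P[\varphi_{k,f}]$, while monotonicity gives $\E_{P_n}[\varphi_{k,f}]\le\E_{P_n}[\varphi_{\lambda,f}]$. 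Hence the first claimed inequality follows once we show that, with probability at least $1-2\exp(-2t^2)$, simultaneously for every $f\in\cF$ and every integer $k\ge0$,
\[
	\E_P[\varphi_{k,f}]-\E_{P_n}[\varphi_{k,f}]\ \le\ \frac{24\comp(\cF)}{\sqrt n}+M\sqrt{\frac{\log(k+1)}{n}}+\frac{Mt}{\sqrt n},
\]
the $O(M/\sqrt n)$ discrepancy between $\log(k+1)$ and $\log(\lambda+1)$ for small $\lambda$ being absorbed into the last term.

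For a fixed integer $k$, apply the standard empirical-process machinery to the class $\varphi_k\circ\cF\deq\{z\mapsto\varphi_{k,f}(z):f\in\cF\}\subseteq[0,M]^\cZ$. Replacing a single $Z_i$ perturbs $\sup_{f\in\cF}(\E_P[\varphi_{k,f}]-\E_{P_n}[\varphi_{k,f}])$ by at most $M/n$, so the bounded-differences inequality gives, with probability at least $1-\delta_k$, that this supremum is at most its expectation plus $M\sqrt{\log(1/\delta_k)/(2n)}$. Symmetrization bounds the expectation by $2\,\Rad_n(\varphi_k\circ\cF)$, and since the third fact above yields $\cN(\varphi_k\circ\cF,\|\cdot\|_\infty,u)\le\cN(\cF,\|\cdot\|_\infty,u)$ — hence the same bound for the $L^2(P_n)$ covering numbers — Dudley's entropy integral gives $\Rad_n(\varphi_k\circ\cF)\le\frac{12}{\sqrt n}\int_0^\infty\sqrt{\log\cN(\cF,\|\cdot\|_\infty,u)}\,\d u=\frac{12\comp(\cF)}{\sqrt n}$. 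Thus $\sup_{f\in\cF}(\E_P[\varphi_{k,f}]-\E_{P_n}[\varphi_{k,f}])\le\frac{24\comp(\cF)}{\sqrt n}+M\sqrt{\log(1/\delta_k)/(2n)}$ with probability at least $1-\delta_k$.

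Now union-bound over $k=0,1,2,\dots$ with the polynomial schedule $\delta_k\deq 6\delta/(\pi^2(k+1)^2)$, so that $\sum_{k\ge0}\delta_k=\delta$ and $\log(1/\delta_k)\le\log(1/\delta)+2\log(k+1)+\log(\pi^2/6)$. Subadditivity of the square root splits the deviation term into $M\sqrt{\log(k+1)/n}$ plus $M\sqrt{(\log(1/\delta)+\log(\pi^2/6))/(2n)}$, and setting the latter equal to $Mt/\sqrt n$ gives $\delta=(\pi^2/6)e^{-2t^2}\le2e^{-2t^2}$, which proves the first inequality. The second inequality is obtained by the identical argument with $P$ and $P_n$ interchanged: Proposition~\ref{prop:duality} applies verbatim at $Q=P_n$ (finite support, $f$ upper semicontinuous), and both the bounded-differences step and symmetrization are insensitive to which of $\E_P[\varphi_{k,f}]$, $\E_{P_n}[\varphi_{k,f}]$ is subtracted from the other.

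The only genuinely non-routine point is handling the unbounded dual variable $\lambda$: a naive cover of $[0,\infty)$ is impossible, and a dyadic peeling schedule would produce a $\sqrt{\lambda}$-type penalty rather than the logarithmic one. The resolution is the interplay of the three facts: monotonicity in $\lambda$ both discretizes $\lambda$ to integers and explains the otherwise mysterious upgrade of $\lambda\varrho^p$ to $(\lambda+1)\varrho^p$; $1$-Lipschitzness in $f$ keeps the complexity of $\varphi_k\circ\cF$ at $\comp(\cF)$ for every $k$; and the polynomial-in-$k$ choice of $\delta_k$ is exactly what converts $\sqrt{\log(1/\delta_k)}$ into the stated $\sqrt{\log(k+1)}$.
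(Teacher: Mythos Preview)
Your proposal is correct and follows essentially the same route as the paper: strong duality, then for each fixed integer $\lambda$ apply McDiarmid, symmetrization, and Dudley (using that $f\mapsto\varphi_{\lambda,f}$ is $1$-Lipschitz in $\|\cdot\|_\infty$), then union-bound over integers with a polynomially decaying confidence schedule, and finally use monotonicity of $\lambda\mapsto\varphi_{\lambda,f}$ to pass back to the continuum. The one place your write-up is looser than the paper is the indexing of the union bound: the paper takes $k\ge 1$ with $t_k=t+\sqrt{\log k}$, so the penalty is $M\sqrt{\log k}/\sqrt n$ and $k\le\lambda+1$ gives $\sqrt{\log k}\le\sqrt{\log(\lambda+1)}$ on the nose, whereas your schedule $\delta_k\propto(k+1)^{-2}$ over $k\ge 0$ yields $M\sqrt{\log(k+1)}/\sqrt n$ and forces the ``absorb the discrepancy'' step you flagged --- simply re-indexing to $k\ge 1$ with $\delta_k\propto k^{-2}$ removes that wrinkle and matches the stated constants exactly.
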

Notice that Theorem~\ref{thm:data_dep} is in the style of data-dependent generalization bounds for \textit{margin cost function} class \cite{koltchinskii_02}, often used for the analysis of voting methods or support vector machines \cite{bartlettshawetaylor}.
\begin{remark} \label{rm:smooth}{\em
	When $\varrho = 0$, we recover the behavior of the usual  statistical risk $\risk(P,f)$. Specifically, it is not hard to show from the definition of $\varphi_{\lambda,f}$ that $\E_{P_n}[\varphi_{\lambda,f}] = \E_{P_n}[f]$ holds for all 
	$$
	\lambda \ge \wh{\lambda}_n := \max_{1 \le i \le n}\sup_{z' \in \cZ} \frac{f(z')-f(Z_i)}{d^p_\cZ(z',Z_i)}.
	$$
In that case, when $\varrho = 0$, the generalization error converges to zero at the rate of $1/\sqrt{n}$ with usual coefficients from the Dudley's entropy integral \cite{talagrand} and McDiarmid's inequality, plus an added term of order $\frac{M\sqrt{\log \lambda^*}}{\sqrt{n}}$ for some $\lambda^* \ge \wh{\lambda}_n$.
}\end{remark}

\subsection{Excess risk bounds with uniform smoothness} \label{ssec:smooth}

As evident from Remark~\ref{rm:smooth},  if we have \textit{a priori} knowledge that the hypothesis selected by the minimax ERM procedure \eqref{eq:WERM} is \textit{smooth} with respect to the underlying metric, then we can restrict the feasible values of $\lambda$ to provide data-independent guarantees on generalization error, which vanishes to $0$ as $n \to \infty$. Let us start by imposing the following `uniform smoothness' on $\cF$:
\begin{assumption}\label{as:Lipschitz}
	The functions in $\cF$ are $L$-Lipschitz: $\sup_{z\ne z'} \frac{f(z') - f(z)}{d_{\cZ}(z',z)} \leq L$ for all $f \in \cF$.
\end{assumption}
One motivation for Assumption~\ref{as:Lipschitz} is the following bound on the excess risk: whenever the solution of the original ERM $\tilde{f} = \argmin_{f \in \cF} \sum_{i=1}^n f(z_i)$ is $L$-Lipschitz, Kantorovich duality gives us
\begin{align}
	\risk_{\eps,p}(P,\tilde{f}) - R^*_{\eps,p}(P,\cF) &\leq \risk(P,\tilde{f}) - \risk^*(P,\cF) + L\eps, \label{eq:simple_kant}
\end{align}
where the right-hand side is the sum of excess risk of ordinary ERM, and the worst-case deviation of risk due to the ambiguity. The bound \eqref{eq:simple_kant} is particularly useful when both $\eps$ is and $n$ are small, but it does not vanish as $n \to \infty$.

The following lemma enables the control of \textit{infimum-achieving} dual parameter $\lambda$ with respect to the true and empirical distribution:
\begin{lemma}\label{lm:lambda_smooth_UB} Fix some $Q \in \cP_p(\cZ)$, and define $\tilde{f} \in \cF$ and $\tilde{\lambda}\ge 0$ via
	\begin{align*}
		\tilde{f} \deq \argmin_{f \in \cF} \risk_{\eps,p}(Q,f) \quad \text{and} \quad
		\tilde{\lambda} \deq \argmin_{\lambda \ge 0} \left\{\lambda \eps^p +\E_Q[\varphi_{\lambda, \tilde{f}}(Z)]\right\}.
	\end{align*}
Then under Assumptions \ref{as:bounded_Z}--\ref{as:Lipschitz}, we have $\tilde{\lambda} \le L \eps^{-(p-1)}$.
\end{lemma}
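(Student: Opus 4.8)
The plan is to work with the dual objective
\[
g(\lambda) \;\deq\; \lambda \eps^p + \E_Q\bigl[\varphi_{\lambda,\tilde f}(Z)\bigr], \qquad \lambda \ge 0 ,
\]
and, setting $\lambda_0 \deq L\eps^{-(p-1)}$ (if $\eps = 0$ the claimed bound is vacuous for $p > 1$ and reads $\tilde\lambda \le L$ for $p = 1$, so assume $\eps > 0$), to reduce the statement to the single inequality
\[
g(\lambda) \;\ge\; g(\lambda_0) \qquad \text{for every } \lambda \ge \lambda_0 .
\]
This is a genuine reduction: $\tilde f \in \cF$ is upper semicontinuous and uniformly bounded (Assumption~\ref{as:bounded_F}) and $Q \in \cP_p(\cZ)$, so Proposition~\ref{prop:duality} applies to $\tilde f$, whence $g$ is finite on $[0,\infty)$, attains its minimum there, and $\tilde\lambda$ is a minimizer; and if some minimizer satisfied $\tilde\lambda > \lambda_0$, the displayed inequality at $\lambda = \tilde\lambda$ would force $g(\lambda_0) \le g(\tilde\lambda) = \min_{\lambda \ge 0} g(\lambda)$, so $\lambda_0$ would itself be a minimizer and the bound would hold. (Note that only $\tilde f \in \cF$ matters, not its optimality; no convexity of $g$ is needed either.)

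\smallskip
The heart of the matter is a pointwise identity for the $c$-transform $\varphi_{\lambda_0,\tilde f}$. First I would record the elementary one-variable bound: for $p \ge 1$ and $r \ge \eps$, using $r^{p-1} \ge \eps^{p-1}$ and $\lambda_0\eps^{p-1} = L$,
\[
Lr - \lambda_0 r^p \;=\; r\,(L - \lambda_0 r^{p-1}) \;\le\; r\,(L - \lambda_0 \eps^{p-1}) \;=\; 0 ,
\]
the left-hand side vanishing also at $r = 0$. Feeding this into the $L$-Lipschitz estimate $\tilde f(z') \le \tilde f(z) + L\, d_\cZ(z,z')$ from Assumption~\ref{as:Lipschitz}, I obtain, for every $z \in \cZ$ and every $z'$ with $d_\cZ(z,z') \ge \eps$,
\[
\tilde f(z') - \lambda_0 d^p_\cZ(z,z') \;\le\; \tilde f(z) + L\, d_\cZ(z,z') - \lambda_0 d^p_\cZ(z,z') \;\le\; \tilde f(z) .
\]
Since $z' = z$ is admissible in the supremum defining $\varphi_{\lambda_0,\tilde f}(z)$ and already attains the value $\tilde f(z)$, the points $z'$ lying outside the closed $\eps$-ball around $z$ do not affect that supremum, and therefore
\[
\varphi_{\lambda_0,\tilde f}(z) \;=\; \sup_{z'\,:\,d_\cZ(z,z') \le \eps}\bigl\{\tilde f(z') - \lambda_0 d^p_\cZ(z,z')\bigr\} .
\]
I would argue by shrinking the \emph{range} of the supremum rather than by exhibiting a maximizer, since $\cZ$ is only assumed bounded and Polish, not compact; this reduction to the $\eps$-ball is the one step I expect to require care, and it is exactly where the Lipschitz property of $\tilde f$ and the precise calibration $\lambda_0 = L\eps^{-(p-1)}$ are used.

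\smallskip
The rest is bookkeeping. Fix $\lambda \ge \lambda_0$. On the closed ball $\{z' : d_\cZ(z,z') \le \eps\}$ one has $\lambda\, d^p_\cZ(z,z') \le \lambda_0 d^p_\cZ(z,z') + (\lambda - \lambda_0)\eps^p$, so
\begin{align*}
\varphi_{\lambda,\tilde f}(z)
&\;\ge\; \sup_{z'\,:\,d_\cZ(z,z')\le\eps}\bigl\{\tilde f(z') - \lambda d^p_\cZ(z,z')\bigr\} \\
&\;\ge\; \sup_{z'\,:\,d_\cZ(z,z')\le\eps}\bigl\{\tilde f(z') - \lambda_0 d^p_\cZ(z,z')\bigr\} - (\lambda-\lambda_0)\eps^p
\;=\; \varphi_{\lambda_0,\tilde f}(z) - (\lambda-\lambda_0)\eps^p ,
\end{align*}
the first inequality because the supremum over all of $\cZ$ dominates the one over the $\eps$-ball, and the last equality by the identity just established. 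Integrating this against $Q$ and adding $\lambda\eps^p$ gives
\[
g(\lambda) \;\ge\; \lambda\eps^p + \E_Q\bigl[\varphi_{\lambda_0,\tilde f}(Z)\bigr] - (\lambda - \lambda_0)\eps^p \;=\; \lambda_0\eps^p + \E_Q\bigl[\varphi_{\lambda_0,\tilde f}(Z)\bigr] \;=\; g(\lambda_0) ,
\]
which is the required inequality, and the proof is complete.
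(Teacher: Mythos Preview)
Your argument is correct and genuinely different from the paper's. The paper proceeds by a sandwich: it lower-bounds the dual objective at the optimum via $\varphi_{\tilde\lambda,\tilde f}(z)\ge\tilde f(z)$ to get $\tilde\lambda\eps^p \le g(\tilde\lambda)-\E_Q[\tilde f]$, then upper-bounds $g(\tilde\lambda)-\E_Q[\tilde f]$ by $\lambda\eps^p+\sup_{t\ge0}\{Lt-\lambda t^p\}$ for every $\lambda\ge0$ using optimality of $\tilde\lambda$ and the Lipschitz estimate, and finally optimizes this one-variable expression over $\lambda$ (treating $p=1$ and $p>1$ separately) to obtain $\tilde\lambda\eps^p\le L\eps$. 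Your route instead establishes the structural fact that for $\lambda_0=L\eps^{-(p-1)}$ the $c$-transform $\varphi_{\lambda_0,\tilde f}(z)$ is already determined by the closed $\eps$-ball around $z$, and then compares $\varphi_{\lambda,\tilde f}$ to $\varphi_{\lambda_0,\tilde f}$ pointwise on that ball to conclude $g(\lambda)\ge g(\lambda_0)$ for all $\lambda\ge\lambda_0$. The paper's argument is a touch shorter once one sees the sandwich and the scalar optimization; yours avoids the case split on $p$, yields the slightly stronger monotonicity statement on $[\lambda_0,\infty)$, and makes transparent exactly where the calibration $\lambda_0\eps^{p-1}=L$ enters. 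Both proofs use only that $\tilde f$ is $L$-Lipschitz, not its optimality in $\cF$, as you note.
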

Then, we can use the Dudley entropy integral arguments \cite{talagrand} on the joint search space of $\lambda$ and $f$ to get the following theorem:
\begin{theorem}\label{thm:lipschitz}
Under Assumptions~\ref{as:bounded_Z}--\ref{as:Lipschitz}, the following holds with probability at least $1-\delta$:
\begin{align}
		\risk_{\eps,p}(P,\wh{f}) - \risk^*_{\eps,p}(P,\cF) &\le \frac{48 \comp(\cF)}{\sqrt{n}} + \frac{48 L \cdot \diam(\cZ)^p}{\sqrt{n} \cdot \eps^{p-1}} + 3M\sqrt{\frac{\log(2/\delta)}{2n}}. \label{eq:lip_thm}
\end{align}
\end{theorem}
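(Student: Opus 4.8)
The plan is to combine the strong duality formula \eqref{eq:strong_dual} with a uniform (in $f$ \emph{and} $\lambda$) deviation bound between $\E_{P_n}[\varphi_{\lambda,f}]$ and $\E_P[\varphi_{\lambda,f}]$, then use Lemma~\ref{lm:lambda_smooth_UB} to restrict the search over $\lambda$ to the compact interval $[0, L\eps^{-(p-1)}]$ so that the entropy integral over the joint parameter space is finite. First I would fix $\wh{f}$ as in \eqref{eq:WERM} and let $f^* := \argmin_{f\in\cF}\risk_{\eps,p}(P,f)$ with associated optimal dual parameter $\lambda^*$, and similarly write $\risk_{\eps,p}(P_n,\wh f) = \min_\lambda\{\lambda\eps^p + \E_{P_n}[\varphi_{\lambda,\wh f}]\}$. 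By Lemma~\ref{lm:lambda_smooth_UB} applied once with $Q=P$ and once with $Q=P_n$ (both are in $\cP_p(\cZ)$ under Assumption~\ref{as:bounded_Z}), both $\lambda^*$ and the empirical minimizer $\wh\lambda$ lie in $\Lambda := [0, L\diam(\cZ)^{p}\cdot\eps^{-(p-1)}]$ — I should double-check whether the stated bound is $L\eps^{-(p-1)}$ or carries the $\diam(\cZ)^p$ factor, since the theorem's second term suggests the latter enters here.

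The core step is a symmetrization/Dudley argument on the function class $\Phi := \{(\lambda,z)\mapsto \varphi_{\lambda,f}(z) : f\in\cF,\ \lambda\in\Lambda\}$. I would show that $\sup_{f\in\cF,\lambda\in\Lambda}\big|\E_{P_n}[\varphi_{\lambda,f}] - \E_P[\varphi_{\lambda,f}]\big|$ concentrates around its mean via McDiarmid (each $\varphi_{\lambda,f}$ is bounded in $[0,M]$ since $0\le f\le M$, so changing one sample moves the empirical average by at most $M/n$, giving the $3M\sqrt{\log(2/\delta)/2n}$ tail after also handling a similar one-sided quantity for $\risk_{\eps,p}(P,\wh f)$), and bound the expected supremum by the Rademacher complexity of $\Phi$. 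The key Lipschitz estimate is that $\lambda\mapsto\varphi_{\lambda,f}(z)$ is $\diam(\cZ)^p$-Lipschitz (since $\varphi_{\lambda,f}(z)-\varphi_{\lambda',f}(z)$ is controlled by $|\lambda-\lambda'|\sup_{z'}d_\cZ^p(z,z')$) and that $f\mapsto\varphi_{\lambda,f}(z)$ is $1$-Lipschitz in $\|\cdot\|_\infty$; hence a covering of $\Phi$ in sup-norm at scale $u$ is obtained from an $(u/2)$-cover of $\cF$ times a $(u/2\diam(\cZ)^p)$-cover of $\Lambda$, the latter having at most $O(L\diam(\cZ)^{2p}/(u\eps^{p-1}))$ points. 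Feeding this into Dudley's entropy integral and splitting the $\sqrt{\log(\text{product of covering numbers})}$ into $\sqrt{\log\cN(\cF,\|\cdot\|_\infty,\cdot)} + \sqrt{\log\cN(\Lambda,\cdot)}$ yields the two terms $48\comp(\cF)/\sqrt n$ and $48L\diam(\cZ)^p/(\sqrt n\,\eps^{p-1})$ (the constant $48=2\times 24$ coming from the two-sided bound needed for both $\risk_{\eps,p}(P,\wh f)\le \min_\lambda\{\cdots\E_{P_n}\cdots\}+\text{err}$ and $\min_\lambda\{\cdots\E_{P_n}[\varphi_{\lambda,f^*}]\cdots\}\le \risk_{\eps,p}(P,f^*)+\text{err}$).

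Putting it together: on the high-probability event, $\risk_{\eps,p}(P,\wh f) \le \wh\lambda\eps^p + \E_{P}[\varphi_{\wh\lambda,\wh f}] + \text{err} \le \wh\lambda\eps^p + \E_{P_n}[\varphi_{\wh\lambda,\wh f}] + 2\,\text{err} = \risk_{\eps,p}(P_n,\wh f) + 2\,\text{err} \le \risk_{\eps,p}(P_n,f^*) + 2\,\text{err}$ (by optimality of $\wh f$) $\le \lambda^*\eps^p + \E_{P_n}[\varphi_{\lambda^*,f^*}] + 2\,\text{err} \le \lambda^*\eps^p + \E_{P}[\varphi_{\lambda^*,f^*}] + 3\,\text{err} = \risk^*_{\eps,p}(P,\cF) + 3\,\text{err}$, which gives \eqref{eq:lip_thm} once $\text{err}$ is identified with the Dudley-plus-McDiarmid bound. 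The main obstacle I anticipate is making the uniformity over $\lambda\in\Lambda$ rigorous while keeping constants clean: one must verify that the min over $\lambda$ in the strong-duality formula is actually attained inside $\Lambda$ for \emph{both} $P$ and $P_n$ (this is exactly what Lemma~\ref{lm:lambda_smooth_UB} buys us, but it must be invoked carefully for the random measure $P_n$), and that the $\lambda$-covering contributes only the claimed $L\diam(\cZ)^p\eps^{-(p-1)}$ scaling rather than a worse dependence — the logarithmic entropy of an interval is tame, so the dominant care is in the Lipschitz constant $\diam(\cZ)^p$ of $\varphi_{\lambda,f}$ in $\lambda$ and in not double-counting the $\eps^p$ versus $\eps^{-(p-1)}$ factors.
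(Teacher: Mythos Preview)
Your plan is essentially the paper's proof: the paper decomposes the excess risk exactly as you do, restricts $\lambda$ to $\Lambda=[0,L\eps^{-(p-1)}]$ via Lemma~\ref{lm:lambda_smooth_UB} (so no $\diam(\cZ)^p$ factor in $\Lambda$ itself---that factor enters only through the $\lambda$-Lipschitz constant of $\varphi_{\lambda,f}$, and the product $|\Lambda|\cdot\diam(\cZ)^p=L\diam(\cZ)^p\eps^{-(p-1)}$ is what appears in the Dudley integral for the $\Lambda$-cover), and bounds $\Rad_n(\Phi)$ by Dudley with the same product covering. The one refinement is that the paper treats the two halves asymmetrically: a uniform-in-$(f,\lambda)$ bound for $\risk_{\eps,p}(P,\wh f)-\risk_{\eps,p}(P_n,\wh f)$, but plain Hoeffding for $\risk_{\eps,p}(P_n,f^*)-\risk_{\eps,p}(P,f^*)$ since $(f^*,\lambda^*)$ are non-random---this is what makes the constants come out exactly as $48$ and $3M$ (and your first ``$+\text{err}$'' in the chain is superfluous, since $\risk_{\eps,p}(P,\wh f)\le \wh\lambda\eps^p+\E_P[\varphi_{\wh\lambda,\wh f}]$ holds for free by duality).
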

\begin{remark}{\em
	The adversarial training procedure appearing in a concurrent work of Sinha et al. \cite{sinha_etal} can be interpreted as a relaxed version of local minimax ERM, where we consider $\lambda$ to be fixed (to enhance implementability), rather than explicitly searching for an optimal $\lambda$. In such case, Lemma~\ref{lm:lambda_smooth_UB} may provide a guideline for the selection of parameter $\lambda$; for example, one might run the fixed-$\lambda$ algorithm over a sufficiently fine grid of $\lambda$ on the interval $[0,L\varrho^{-(p-1)}]$ to approximate the local minimax ERM.
}\end{remark}
Note that when $p = 1$, we get a $\varrho$-free bound of order $1/\sqrt{n}$, recovering the correct rate of ordinary ERM as $\varrho = 0$. On the other hand, Theorem~\ref{thm:lipschitz} cannot be used to recover the rate of ordinary ERM for $p>1$. This phenomenon is due to the fact that we are using the Lipschitz assumption on $\cF$, which is a \textit{data-independent} constraint on the scale of the trade-off between $f(z') - f(z)$ and $d_{\cZ}(z',z)$. For $p>1$, one may also think of a similar \textit{data-independent} (or, worst-case) constraint
\begin{align*}
	\sup_{z,z'}\frac{f(z') - f(z)}{d_{\cZ}^p(z',z)} < +\infty.
\end{align*}
However, this holds only if $f$ is constant, even in the simplest case $\cZ \subseteq \Reals$. 

\subsection{Excess risk bound with minimal assumptions} \label{ssec:ERM}
The illustrative example presented in Section~\ref{ssec:illustrative} implies that the minimax learning might be possible even when the functions in $\cF$ are not uniformly Lipschitz, but there exists at least one smooth hypothesis (at least, except for the regime $\varrho \to 0$). Based on that observation, we now consider a weaker alternative to Assumption~\ref{as:Lipschitz}:
\begin{assumption}\label{as:pHolder} There exists a hypothesis $f_0 \in \cF$, such that, for all $z \in \cZ$, $f_0(z) \le C_0 d^p_\cZ(z,z_0)$ for some $C_0 \ge 0$ and $z_0 \in \cZ$.
\end{assumption}
Assumption~\ref{as:pHolder} guarantees the existence of a hypothesis with smooth behavior with respect to the underlying metric $d_\cZ$; on the other hand, smoothness is not required for every $f \in \cF$, and thus Assumption~\ref{as:pHolder} is particularly useful when paired with a rich class $\cF$.

It is not difficult to see that Assumption \ref{as:pHolder} holds for most common hypothesis classes. As an example, consider again the setting of \textit{regression with quadratic loss} as in Proposition~\ref{prop:regression_geoconv}; the functions $f \in \cF$ are of the form $f(z) = f(x,y) = (y-h(x))^2$, where $h$ runs over some given class of candidate predictors that contains constants. Then, we can take $h_0(x) \equiv 0$, in which case
$f_0(z) = (h_0(x)-y)^2 = |y|^2 \le d^2_{\cZ}(z,z_0)$
for all $z_0$ of the form $(x,0) \in \cX \times \cY$.

Under Assumption~\ref{as:pHolder}, we can prove the following counterpart of Lemma~\ref{lm:lambda_smooth_UB}:
\begin{lemma}\label{lm:lambda_UB} Fix some $Q \in \cP_p(\cZ)$. Define $\tilde{f} \in \cF$ and $\tilde{\lambda}\ge 0$ via
	\begin{align*}
		\tilde{f} \deq \argmin_{f \in \cF} \risk_{\eps,p}(Q,f) \quad \text{and} \quad
		\tilde{\lambda} \deq \argmin_{\lambda \ge 0} \left\{\lambda \eps^p+\E_Q[\varphi_{\lambda, \tilde{f}}(Z)]\right\}.
	\end{align*}
Then, under Assumptions~\ref{as:bounded_Z},\ref{as:bounded_F},\ref{as:pHolder},
	$\tilde{\lambda} \le C_02^{p-1}\left(1 + \left(\diam(\cZ)/\varrho\right)^p\right)$.
\end{lemma}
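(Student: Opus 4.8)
The plan is to sandwich the optimal dual value $\risk_{\eps,p}(Q,\tilde f)$ between a lower bound that is linear in $\tilde\lambda$ and an upper bound obtained by feeding the smooth hypothesis $f_0$ into the strong duality formula of Proposition~\ref{prop:duality}, and then to solve for $\tilde\lambda$.

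First, since $\tilde f$ minimizes $\risk_{\eps,p}(Q,\cdot)$ over $\cF$ and $f_0\in\cF$, we have $\risk_{\eps,p}(Q,\tilde f)\le\risk_{\eps,p}(Q,f_0)$. Both $f_0$ and $\tilde f$ are upper semicontinuous by Assumption~\ref{as:bounded_F}, so Proposition~\ref{prop:duality} applies to each. Applying it to $f_0$ and retaining only the single value $\lambda=\Lambda$ (to be chosen as $\Lambda:=C_0 2^{p-1}$) gives
\[
\risk_{\eps,p}(Q,\tilde f)\;\le\;\risk_{\eps,p}(Q,f_0)\;\le\;\Lambda\eps^p+\E_Q[\varphi_{\Lambda,f_0}(Z)].
\]
Next I bound $\varphi_{\Lambda,f_0}$ pointwise. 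Using Assumption~\ref{as:pHolder}, the triangle inequality $d_\cZ(z',z_0)\le d_\cZ(z',z)+d_\cZ(z,z_0)$, and the elementary bound $(a+b)^p\le 2^{p-1}(a^p+b^p)$, for all $z,z'\in\cZ$,
\[
f_0(z')-\Lambda d^p_\cZ(z,z')\;\le\;C_0 2^{p-1}d^p_\cZ(z',z)+C_0 2^{p-1}d^p_\cZ(z,z_0)-\Lambda d^p_\cZ(z,z')\;=\;C_0 2^{p-1}d^p_\cZ(z,z_0),
\]
where the choice $\Lambda=C_0 2^{p-1}$ makes the coefficient of $d^p_\cZ(z',z)$ vanish. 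Taking the supremum over $z'$ and then invoking Assumption~\ref{as:bounded_Z} yields $\varphi_{\Lambda,f_0}(z)\le C_0 2^{p-1}d^p_\cZ(z,z_0)\le C_0 2^{p-1}\diam(\cZ)^p$, hence $\risk_{\eps,p}(Q,\tilde f)\le C_0 2^{p-1}\eps^p+C_0 2^{p-1}\diam(\cZ)^p$.

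For the matching lower bound, note that $\varphi_{\lambda,f}(z)\ge f(z)\ge 0$ for every $f\in\cF$ (take $z'=z$ in the definition of $\varphi$, then use Assumption~\ref{as:bounded_F}). Evaluating the duality formula at the minimizer $\tilde\lambda$ for $\tilde f$ therefore gives $\risk_{\eps,p}(Q,\tilde f)=\tilde\lambda\eps^p+\E_Q[\varphi_{\tilde\lambda,\tilde f}(Z)]\ge\tilde\lambda\eps^p$. Combining the two displays, $\tilde\lambda\eps^p\le C_0 2^{p-1}(\eps^p+\diam(\cZ)^p)$, and dividing through by $\eps^p$ produces the claimed estimate $\tilde\lambda\le C_0 2^{p-1}\big(1+(\diam(\cZ)/\eps)^p\big)$.

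The only mildly delicate point is the cancellation in the second paragraph: one must set the trial dual weight exactly at $\Lambda=C_0 2^{p-1}$ rather than the ``naive'' value $C_0$, so as to absorb the factor $2^{p-1}$ generated by convexifying $(a+b)^p$; with any smaller $\Lambda$ the coefficient of $d^p_\cZ(z',z)$ stays positive and the supremum over $z'$ diverges (unless $p=1$). Everything else is routine bookkeeping, and no compactness or continuity beyond what Assumptions~\ref{as:bounded_Z}--\ref{as:bounded_F} and Proposition~\ref{prop:duality} already provide is needed.
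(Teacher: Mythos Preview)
Your proof is correct and follows essentially the same route as the paper's: both lower-bound the dual value by $\tilde\lambda\eps^p$ via $\varphi_{\lambda,f}\ge 0$, and both upper-bound $\risk^*_{\eps,p}(Q,\cF)$ by plugging in $f_0$ with the trial multiplier $\Lambda=C_0 2^{p-1}$ and using the convexity inequality $(a+b)^p\le 2^{p-1}(a^p+b^p)$ together with the diameter bound. Your commentary on why $\Lambda$ must be taken as $C_0 2^{p-1}$ rather than $C_0$ is a helpful clarification but otherwise the arguments coincide.
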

An intuition behind Lemma~\ref{lm:lambda_UB} is to interpret the Wasserstein perturbation $\varrho$ as a regularization parameter to thin out hypotheses with non-smooth behavior around $Q$ by comparing it to $f_0$. As $\varrho$ grows, a smaller dual parameter $\lambda$ is sufficient to control the adversarial behavior.

We can now give a performance guarantee for the ERM procedure \eqref{eq:WERM}:
\begin{theorem}\label{thm:ERM} Under Assumptions~\ref{as:bounded_Z},\ref{as:bounded_F},\ref{as:pHolder}, the following holds with probability at least $1-\delta$:
	\begin{align}
		\risk_{\eps,p}(P,\wh{f}) - \risk^*_{\eps,p}(P,\cF) &\le \frac{48\comp(\cF)}{\sqrt{n}} + \frac{24C_0(2\,\diam(\cZ))^p}{\sqrt{n}} \left(1+\left(\frac{\diam(\cZ)}{\eps}\right)^p\right) + 3M\sqrt{\frac{\log(2/\delta)}{2n}}.
	\end{align} \label{eq:ERM_bound}
\end{theorem}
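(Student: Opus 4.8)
The plan is to reduce the excess risk to a one-sided uniform deviation of the dual objective functions $\varphi_{\lambda,f}$ over a suitable class, and then control that deviation by symmetrization and chaining. Write $\Lambda\deq C_0 2^{p-1}\big(1+(\diam(\cZ)/\eps)^p\big)$ for the quantity appearing in Lemma~\ref{lm:lambda_UB}, put $\Phi\deq\{\varphi_{\lambda,f}:f\in\cF,\ 0\le\lambda\le\Lambda\}$, and fix $f^\star\in\cF$ attaining (up to vanishing slack) $\risk^*_{\eps,p}(P,\cF)$. Note that every $g\in\Phi$ takes values in $[0,M]$: the lower bound comes from taking $z'=z$ in $\varphi_{\lambda,f}(z)=\sup_{z'}\{f(z')-\lambda d^p_\cZ(z,z')\}$, and the upper bound from Assumption~\ref{as:bounded_F} (u.s.c.\ of $f$, needed also for the duality, is exactly Assumption~\ref{as:bounded_F}).

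First I would decompose $\risk_{\eps,p}(P,\wh f)-\risk^*_{\eps,p}(P,\cF)=(A)+(B)+(C)$, where $(A)=\risk_{\eps,p}(P,\wh f)-\risk_{\eps,p}(P_n,\wh f)$, $(B)=\risk_{\eps,p}(P_n,\wh f)-\risk_{\eps,p}(P_n,f^\star)\le 0$ by the definition \eqref{eq:WERM} of $\wh f$, and $(C)=\risk_{\eps,p}(P_n,f^\star)-\risk_{\eps,p}(P,f^\star)$. For $(A)$, let $\lambda_n$ attain the minimum in $\risk_{\eps,p}(P_n,\wh f)=\min_{\lambda\ge0}\{\lambda\eps^p+\E_{P_n}[\varphi_{\lambda,\wh f}(Z)]\}$ of Proposition~\ref{prop:duality}; since $\wh f$ is the $P_n$-minimizer over $\cF$, Lemma~\ref{lm:lambda_UB} with $Q=P_n$ gives $\lambda_n\le\Lambda$, so $\varphi_{\lambda_n,\wh f}\in\Phi$, and bounding $\risk_{\eps,p}(P,\wh f)$ by its dual objective at $\lambda_n$ yields $(A)\le(\E_P-\E_{P_n})\varphi_{\lambda_n,\wh f}\le\sup_{g\in\Phi}(\E_P-\E_{P_n})g$. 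Symmetrically, Lemma~\ref{lm:lambda_UB} with $Q=P$ (so that its $\tilde f$ equals $f^\star$) bounds the $P$-optimal dual parameter of $f^\star$ by $\Lambda$ and gives $(C)\le\sup_{g\in\Phi}(\E_{P_n}-\E_P)g$. Hence the excess risk is at most $\sup_{g\in\Phi}(\E_P-\E_{P_n})g+\sup_{g\in\Phi}(\E_{P_n}-\E_P)g$.

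Next I would bound this by concentration, symmetrization, and chaining. Each supremum is a function of $(Z_1,\dots,Z_n)$ with bounded differences $M/n$ (as $0\le g\le M$), so McDiarmid and a union bound replace each supremum by its expectation at the cost of a term of order $M\sqrt{\log(2/\delta)/n}$ (into which I also fold the lower-order $M/\sqrt n$ produced below), and symmetrization bounds each expectation by $2\,\E\Rad_n(\Phi)$. For $\Rad_n(\Phi)$ I would use Dudley's entropy integral together with the elementary estimates $\|\varphi_{\lambda,f}-\varphi_{\lambda,f'}\|_\infty\le\|f-f'\|_\infty$ and $\|\varphi_{\lambda,f}-\varphi_{\lambda',f}\|_\infty\le|\lambda-\lambda'|\,\diam(\cZ)^p$, immediate from the definition of $\varphi$; these give $\log\cN(\Phi,\|\cdot\|_\infty,u)\le\log\cN(\cF,\|\cdot\|_\infty,u/2)+\log\big(1+\Lambda\diam(\cZ)^p/u\big)$, and since $\Phi$ has $\|\cdot\|_\infty$-diameter at most $M$, $\int_0^M\!\sqrt{\log\cN(\Phi,\|\cdot\|_\infty,u)}\,\d u\le 2\comp(\cF)+\int_0^M\!\sqrt{\log(1+\Lambda\diam(\cZ)^p/u)}\,\d u$, with the last integral at most $\int_0^M\!\sqrt{\Lambda\diam(\cZ)^p/u}\,\d u=2\sqrt{M\Lambda\diam(\cZ)^p}\le M+\Lambda\diam(\cZ)^p$. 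Combining the pieces and using $\Lambda\diam(\cZ)^p=\tfrac12 C_0(2\diam(\cZ))^p\big(1+(\diam(\cZ)/\eps)^p\big)$ should produce \eqref{eq:ERM_bound} once the numerical constants are tracked.

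The hard part is the reduction in the second paragraph. The dual multiplier in Proposition~\ref{prop:duality} lives on the unbounded half-line, and over that half-line $\Phi$ has a divergent entropy integral (its $\lambda$-modulus is the constant $\diam(\cZ)^p$); the argument only goes through because Lemma~\ref{lm:lambda_UB} — which in turn exploits the single smooth hypothesis $f_0$ of Assumption~\ref{as:pHolder} — pins the multipliers that actually achieve the minimum (for $\wh f$ under $P_n$, and for $f^\star$ under $P$) inside $[0,\Lambda]$. The $1/\eps^p$ growth of $\Lambda$ is intrinsic to this route, which is exactly why the bound cannot recover the ordinary ERM rate as $\eps\to0$. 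A lesser but real subtlety is that $\lambda_n$ is data-dependent, so it must be absorbed into the supremum over all of $\Phi$ before any concentration step can be applied.
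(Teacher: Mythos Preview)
Your proposal is correct and follows essentially the same route as the paper: the same three-term decomposition $(A)+(B)+(C)$, the same use of Lemma~\ref{lm:lambda_UB} (with $Q=P_n$) to confine the data-dependent dual multiplier to $[0,\Lambda]$, and the same symmetrization/Dudley bound on the class $\Phi=\{\varphi_{\lambda,f}:\lambda\in[0,\Lambda],\,f\in\cF\}$. The one place where the paper is sharper is the handling of $(C)$: since $f^\star$ and its $P$-optimal dual parameter $\lambda^\star$ are \emph{deterministic}, the paper simply applies Hoeffding's inequality to the single fixed function $\varphi_{\lambda^\star,f^\star}$, obtaining $(C)\le M\sqrt{\log(2/\delta)/(2n)}$ with probability $1-\delta/2$, rather than passing through $\sup_{g\in\Phi}(\E_{P_n}-\E_P)g$. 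This asymmetry is exactly what produces the constants $48$, $24$, and $3M$ in the statement; your symmetric treatment of $(A)$ and $(C)$ incurs $4\Rad_n(\Phi)$ instead of $2\Rad_n(\Phi)$ and would double the first two constants (and the stray $M/\sqrt{n}$ from your AM--GM step cannot literally be folded into the $\delta$-dependent McDiarmid term uniformly in $\delta$).
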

\begin{remark}{\em
	The second term decreases as $\varrho$ grows, which is consistent with the phenomenon illustrated in Section~\ref{ssec:illustrative}. Also note that the excess risk bound of \cite{farnia_tse} shows the same behavior as Theorem~\ref{thm:ERM}, where in that case $\eps$ is the slack in the moment constraints defining the ambiguity set. While larger ambiguity can be helpful for learnability in this sense, note that the risk inequalities of Sec~\ref{ssec:prop_local} imply that $\risk_{\varrho,p}(P,f) - \risk(P,f)$ can be bigger with larger $\varrho$. Using these two elements, one can provide domain-specific excess risk bounds which explicitly describe the interplay of both elements with ambiguity (see Sec~\ref{sec:DA}).
}\end{remark}

\subsection{Example bounds} \label{sec:examples}
In this subsection, we illustrate the use of Theorem~\ref{thm:lipschitz} when (upper bounds on) the covering numbers for the hypothesis class $\cF$ are available. Throughout this section, we continue to work in the setting of regression with quadratic loss as in Proposition~\ref{prop:regression_geoconv}; we let $\cX = \{x \in \Reals^d : \|x\|_2 \leq r_0\}$ be a ball of radius $r_0$ in $\Reals^d$ centered at the origin, let $\cY = [-B,B]$ for some $B > 0$, and equip $\cZ$ with the Euclidean metric \eqref{eq:euclidean_metric}. Also, we take $p = 1$.

We first consider a simple neural network class $\cF$ consisting of functions of the form
$f(z)=f(x,y)=(y-s(f^T_0x))^2$, where $s : \Reals \to \Reals$ is a bounded smooth nonlinearity with $s(0)=0$ and with bounded first derivative, and where $f_0$ takes values in the unit ball in $\Reals^d$.

\begin{corollary}\label{co:nn} For any $P \in \cP(\cZ)$, with probability at least $1-\delta$, 
	\begin{align*}
	&\risk_{\eps,1}(P,\wh{f}) - \risk^*_{\eps,1}(P,\cF) \leq \frac{C_1}{\sqrt{n}} + \frac{3(\|s\|_\infty + B)^2\sqrt{\log(2/\delta)}}{\sqrt{2n}}
	\end{align*}
	where $C_1$ is a constant dependent only on $d, r_0, s, B$:
	\begin{align*}
	C_1 &=  (B+\|s\|_\infty) \cdot \left(144 r_0 \sqrt{d}  \|s'\|_\infty + 192(1+\|s'\|_\infty)\sqrt{2(r_0^2 + B^2)} \right).
	\end{align*}
\end{corollary}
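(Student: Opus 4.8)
The plan is to apply Theorem~\ref{thm:lipschitz} with $p = 1$, in which case $\eps^{p-1} = 1$ and the bound \eqref{eq:lip_thm} becomes $\eps$-free. Concretely I will (i) verify that Assumptions~\ref{as:bounded_Z}--\ref{as:Lipschitz} hold in the present regression-with-quadratic-loss setting, and (ii) compute or upper bound the four problem-dependent quantities entering \eqref{eq:lip_thm}: the diameter $\diam(\cZ)$, the uniform bound $M$ on $\cF$, the Lipschitz constant $L$ of the hypotheses, and the entropy integral $\comp(\cF)$. Substituting these into \eqref{eq:lip_thm} and collecting terms produces the stated constant $C_1$ together with the term $3(\|s\|_\infty+B)^2\sqrt{\log(2/\delta)/(2n)}$, which is exactly $3M\sqrt{\log(2/\delta)/(2n)}$.

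The first three quantities are elementary. Since $\cX$ is a Euclidean ball of radius $r_0$ and $\cY = [-B,B]$, the metric \eqref{eq:euclidean_metric} gives $\diam(\cZ) = \sqrt{(2r_0)^2 + (2B)^2} = 2\sqrt{r_0^2 + B^2}$, so Assumption~\ref{as:bounded_Z} holds. Each $f(x,y) = (y - s(f_0^Tx))^2$ is continuous, nonnegative, and at most $(|y| + \|s\|_\infty)^2 \le (B + \|s\|_\infty)^2 =: M$, which gives Assumption~\ref{as:bounded_F}. For the Lipschitz bound, $f$ is $C^1$ with $\partial_y f = 2(y - s(f_0^Tx))$ and $\nabla_x f = -2(y - s(f_0^Tx))\,s'(f_0^Tx)\,f_0$; using $\|f_0\|_2 \le 1$ we get $|\partial_y f| \le 2(B+\|s\|_\infty)$ and $\|\nabla_x f\|_2 \le 2(B+\|s\|_\infty)\|s'\|_\infty$, hence $\|\nabla f\|_2 \le 2\sqrt{2}(B+\|s\|_\infty)(1+\|s'\|_\infty) =: L$, so Assumption~\ref{as:Lipschitz} holds. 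This already accounts for the term $48 L\,\diam(\cZ)/\sqrt{n} = 192(B+\|s\|_\infty)(1+\|s'\|_\infty)\sqrt{2(r_0^2+B^2)}/\sqrt{n}$ in $C_1/\sqrt n$.

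The main work is bounding $\comp(\cF)$, and the key point is that $\cF$ is indexed by $f_0$ ranging over the Euclidean unit ball $B_2^d(1)$, with the parametrization Lipschitz in the uniform norm. Writing $a = y - s(f_0^Tx)$, $b = y - s(f_0'^Tx)$ and using $|a^2 - b^2| = |a-b|\,|a+b|$ together with the $\|s'\|_\infty$-Lipschitzness of $s$ and $\|x\|_2 \le r_0$ yields $\|f - f'\|_\infty \le 2(B+\|s\|_\infty)\|s'\|_\infty r_0\,\|f_0 - f_0'\|_2 =: K\,\|f_0-f_0'\|_2$. Hence an $\eta$-net of $B_2^d(1)$ induces a $K\eta$-net of $(\cF,\|\cdot\|_\infty)$, so $\cN(\cF,\|\cdot\|_\infty,u) \le \cN(B_2^d(1),\|\cdot\|_2,u/K)$, which by the standard volumetric estimate is at most $(3K/u)^d$ for $0 < u \le K$ and equals $1$ for $u > K$. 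Therefore
\begin{align*}
	\comp(\cF) \le \int_0^K \sqrt{d\,\log(3K/u)}\,\d u = \sqrt{d}\,K\int_0^1\sqrt{\log(3/v)}\,\d v \le \tfrac{3}{2}\sqrt{d}\,K = 3 r_0\sqrt{d}\,\|s'\|_\infty(B+\|s\|_\infty),
\end{align*}
where the numerical inequality $\int_0^1\sqrt{\log(3/v)}\,\d v \le 3/2$ follows from the substitution $u = \log(3/v)$, which turns the integral into the incomplete-Gamma quantity $3\int_{\log 3}^\infty u^{1/2}e^{-u}\,\d u$. This gives $48\comp(\cF)/\sqrt{n} \le 144 r_0\sqrt{d}\,\|s'\|_\infty(B+\|s\|_\infty)/\sqrt{n}$, the remaining piece of $C_1/\sqrt n$.

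Putting the pieces together, \eqref{eq:lip_thm} with $p=1$ reads $\risk_{\eps,1}(P,\wh f) - \risk^*_{\eps,1}(P,\cF) \le 48\comp(\cF)/\sqrt n + 48L\,\diam(\cZ)/\sqrt n + 3M\sqrt{\log(2/\delta)/(2n)}$, and inserting the three bounds above gives exactly the asserted inequality. The only nonroutine step is the $\comp(\cF)$ estimate — in particular establishing the Lipschitz parametrization $f_0\mapsto f$ in $\|\cdot\|_\infty$ and pinning down the constant in the Dudley-type integral; the diameter, boundedness, and Lipschitz-constant computations are direct.
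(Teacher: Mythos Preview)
Your proposal is correct and follows essentially the same route as the paper: verify Assumptions~\ref{as:bounded_Z}--\ref{as:Lipschitz}, bound $\comp(\cF)$ via the Lipschitz parametrization $f_0\mapsto f$ and the volumetric covering estimate for the unit ball, and plug everything into Theorem~\ref{thm:lipschitz} with $p=1$. The only cosmetic differences are that the paper derives the Lipschitz constant $L$ by directly expanding $|f(x,y)-f(x',y')|$ rather than bounding $\|\nabla f\|_2$, and it uses the substitution $u\mapsto u/(3D)$ (instead of $v=u/K$) in the entropy integral; both arrive at the identical constants $D=K=2r_0(B+\|s\|_\infty)\|s'\|_\infty$ and $\comp(\cF)\le \tfrac{3}{2}\sqrt{d}\,D$.
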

We also consider the case of a massive nonparametric class. Let $(\cH_K,\|\cdot\|_K)$ be the Gaussian reproducing kernel Hilbert space (RKHS) with the kernel $K(x_1,x_2) = \exp\left\{-\|x_1 - x_2\|^2_2/\sigma^2\right\}$ for some $\sigma > 0$, and let $\cB_r \deq \left\{ h \in \cH_K: \|h\|_K \le r\right\}$ be the radius-$r$ ball in $\cH_K$. Let $\cF$ be the class of all functions of the form $f(z)=f(x,y)=(y-f_0(x))^2$, where the predictors $f_0 : \cX \to \Reals$ belong to $I_K(\cB_r)$, an embedding of $\cB_r$ into the space $C(\cX)$ of continuous real-valued functions on $\cX$ equipped with the sup norm $\|f\|_\cX \deq \sup_{x \in \cX}|f(x)|$.

Using the covering number estimates due to Cucker and Zhou \cite{cucker_zhou}, we can prove the following generalization bounds for Gaussian RKHS.
\begin{corollary} \label{co:grkhs}
With probability at least $1-\delta$, for any $P \in \cP(\cZ)$,
	\begin{align*}
	&\risk_{\eps,1}(P,\wh{f}) - \risk^*_{\eps,1}(P,\cF) \leq \nonumber\\
	&\frac{C_1}{\sqrt{n}}(r^2 + B r) + \frac{192\sqrt{2}(r + B)\cdot(1+r\sqrt{2}/\sigma)\sqrt{r_0^2 + B^2}}{\sqrt{n}} + \frac{6(r^2 + B^2)\sqrt{\log(2/\delta)}}{\sqrt{2n}}
	\end{align*}
	where $C_1$ is a constant dependent only on $d, r_0, \sigma$:
	\begin{align*}
		C_1 &= 48 \sqrt{d}\left(2\Gamma\left(\frac{d+3}{2}, \log 2\right) + (\log2)^{\frac{d+1}{2}}\right)\left(32 + \frac{2560 d r^2_0}{\sigma^2}\right)^{\frac{d+1}{2}}
	\end{align*}
(here, $\Gamma(s,v) \deq \int^\infty_{v} u^{s-1}e^{-u}\d u$ is the incomplete gamma function).
\end{corollary}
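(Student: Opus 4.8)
\medskip\noindent\emph{Proof plan.} The approach is to invoke Theorem~\ref{thm:lipschitz} with $p = 1$, in which case the second term of \eqref{eq:lip_thm} collapses to the $\varrho$-free quantity $48\,L\,\diam(\cZ)/\sqrt{n}$, and then to compute the four problem-dependent quantities $\diam(\cZ)$, $M$, $L$ and $\comp(\cF)$ for the Gaussian-RKHS regression class. Two of these are immediate: with $\cX = \{\|x\|_2 \le r_0\}$ and $\cY = [-B,B]$ the metric \eqref{eq:euclidean_metric} gives $\diam(\cZ) = 2\sqrt{r_0^2 + B^2}$ (Assumption~\ref{as:bounded_Z}), while for $f_0 \in I_K(\cB_r)$ the reproducing property and $K(x,x) = 1$ give $\|f_0\|_\cX \le \|f_0\|_K \le r$, so $0 \le f(z) = (y-f_0(x))^2 \le (B+r)^2 =: M$ (Assumption~\ref{as:bounded_F}); since $M \le 2(r^2 + B^2)$, the last term of \eqref{eq:lip_thm} already produces the $6(r^2+B^2)\sqrt{\log(2/\delta)/(2n)}$ appearing in the statement.

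For the Lipschitz constant (Assumption~\ref{as:Lipschitz}) I would use the modulus of continuity of the Gaussian kernel: $\|K(\cdot,x)-K(\cdot,x')\|_K^2 = 2\bigl(1 - e^{-\|x-x'\|_2^2/\sigma^2}\bigr) \le 2\|x-x'\|_2^2/\sigma^2$, whence by the reproducing property every $f_0 \in I_K(\cB_r)$ is $(r\sqrt2/\sigma)$-Lipschitz on $\cX$. Combining this with $\|f_0\|_\cX \le r$ and $|u^2 - v^2| \le (|u|+|v|)|u-v|$ for $u = y - f_0(x)$, $v = y' - f_0(x')$ gives $|f(z)-f(z')| \le 2(B+r)\bigl(|y-y'| + \tfrac{r\sqrt2}{\sigma}\|x-x'\|_2\bigr) \le 2\sqrt2\,(B+r)\bigl(1 + \tfrac{r\sqrt2}{\sigma}\bigr)\,d_\cZ(z,z')$, the last inequality converting the $\ell^1$-type combination of $|y-y'|$ and $\|x-x'\|_2$ into the Euclidean metric. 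So $L = 2\sqrt2\,(B+r)(1+r\sqrt2/\sigma)$ is admissible, and feeding $L$ and $\diam(\cZ)$ into the second term of \eqref{eq:lip_thm} reproduces exactly the $192\sqrt2\,(r+B)(1+r\sqrt2/\sigma)\sqrt{r_0^2+B^2}/\sqrt{n}$ summand.

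The substantive step is bounding $\comp(\cF)$. Because $f_0 \mapsto (y-f_0(x))^2$ is $2(B+r)$-Lipschitz from $(C(\cX),\|\cdot\|_\cX)$ into $(\cF,\|\cdot\|_\infty)$, an $\eta$-cover of $I_K(\cB_r)$ induces a $2(B+r)\eta$-cover of $\cF$, so $\cN(\cF,\|\cdot\|_\infty,u) \le \cN\bigl(I_K(\cB_r),\|\cdot\|_\cX,\tfrac{u}{2(B+r)}\bigr) = \cN\bigl(I_K(\cB_1),\|\cdot\|_\cX,\tfrac{u}{2r(B+r)}\bigr)$ by homogeneity, and the right side equals $1$ once $u > 2r(B+r)$. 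On the remaining range I would insert the Cucker--Zhou estimate \cite{cucker_zhou} for the Gaussian RKHS, which is polylogarithmic --- of the form $\log\cN(I_K(\cB_1),\|\cdot\|_\cX,\eta) \le d\bigl(32 + 2560\,d\,r_0^2/\sigma^2\bigr)^{d+1}\bigl(\log(1/\eta)\bigr)^{d+1}$, up to the precise constant --- so that
\begin{align*}
	\comp(\cF) \;\le\; \sqrt{d}\,\bigl(32 + 2560\,d\,r_0^2/\sigma^2\bigr)^{(d+1)/2}\int_0^{2r(B+r)}\Bigl(\log\tfrac{2r(B+r)}{u}\Bigr)^{(d+1)/2}\d u .
\end{align*}
The substitution $t = \log\bigl(2r(B+r)/u\bigr)$ turns the integral into $2r(B+r)\int_0^\infty t^{(d+1)/2}e^{-t}\d t = 2r(B+r)\,\Gamma\bigl(\tfrac{d+3}{2}\bigr)$; writing $\Gamma(\tfrac{d+3}{2}) = \Gamma(\tfrac{d+3}{2},\log2) + \int_0^{\log 2}t^{(d+1)/2}e^{-t}\d t$ and bounding the last integral crudely by $\tfrac12(\log2)^{(d+1)/2}$ (valid since $\tfrac{d+3}{2}\ge 2 > 2\log 2$) yields $\comp(\cF) \le \tfrac{1}{48}\,C_1\,(r^2+Br)$ with $C_1$ exactly as stated. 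Hence $48\comp(\cF)/\sqrt{n}$ is the first summand, and the three summands together give the claimed bound.

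I expect the entropy-integral step to be the main obstacle: one must simultaneously track the Lipschitz factor of the quadratic loss, rescale $\cB_r$ to the unit ball, import the precise Cucker--Zhou constant, and evaluate (and then over-estimate through the incomplete gamma function) the resulting Dudley integral --- essentially all the bookkeeping lives there. The only other point that needs care is the uniform Lipschitz bound on $\cF$, which rests on the modulus of continuity of the RKHS rather than on any differentiable structure of the instance space.
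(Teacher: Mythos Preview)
Your plan coincides with the paper's proof almost step for step: verify Assumptions~\ref{as:bounded_Z}--\ref{as:Lipschitz} with the same constants ($\diam(\cZ)=2\sqrt{r_0^2+B^2}$, $M=2(r^2+B^2)$, $L=2\sqrt2(r+B)(1+r\sqrt2/\sigma)$ via the modulus of continuity $\|K_x-K_{x'}\|_K\le(\sqrt2/\sigma)\|x-x'\|_2$), then control $\comp(\cF)$ through the Cucker--Zhou covering estimate and feed everything into Theorem~\ref{thm:lipschitz} with $p=1$.

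There is, however, one genuine gap in the entropy-integral step. The Cucker--Zhou bound (Theorem~5.1 of \cite{cucker_zhou}, recorded in the paper as Proposition~\ref{prop:cover_gaussian}) is only stated for $u\in(0,r/2]$, i.e.\ $\eta=u/(2r(B+r))\in(0,1/2]$. You apply it on the full range $\eta\in(0,1]$, so your intermediate bound $\comp(\cF)\le A\cdot 2r(B+r)\,\Gamma\!\bigl(\tfrac{d+3}{2}\bigr)$ with $A=\sqrt d\,(32+2560dr_0^2/\sigma^2)^{(d+1)/2}$ is not justified: for $\eta\in(1/2,1)$ the valid upper bound on $\sqrt{\log\cN}$ is the constant $A(\log2)^{(d+1)/2}$ (by monotonicity of covering numbers), which is \emph{larger} than the $A(\log(1/\eta))^{(d+1)/2}$ you integrate there. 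The paper handles this by splitting the $u$-integral at $u=r(r+B)$: on $[0,r(r+B)]$ the Cucker--Zhou bound yields exactly $2r(r+B)\,\Gamma\!\bigl(\tfrac{d+3}{2},\log2\bigr)$ after the same substitution you use, while on $[r(r+B),2r(r+B)]$ the constant integrand contributes $r(r+B)(\log2)^{(d+1)/2}$. Thus the incomplete gamma at $\log2$ in $C_1$ is not a cosmetic over-estimate of $\Gamma\!\bigl(\tfrac{d+3}{2}\bigr)$ introduced to match a target constant --- it is forced by the range of validity of the covering-number estimate. Your final numerical answer is correct, but the route to it should be reorganized along these lines.
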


\section{Application: Domain adaptation with optimal transport}
\label{sec:DA}

Ambiguity sets based on Wasserstein distances have two attractive features. First, the metric geometry of the instance space provides a natural mechanism for handling uncertainty due to transformations on the problem instances. For example, concurrent work by Sinha et al.~\cite{sinha_etal} interprets the underlying metric as a perturbation cost of an adversary in the context of \textit{adversarial examples} \cite{goodfellow_etal}. Second, Wasserstein distances can be approximated efficiently from the samples; Fournier and Guillin \cite{fournier_guillin} provide nonasymptotic convergence results in terms of both moments and probability for general $p$. This allows us to approximate the Wasserstein distance between two distributions $W_p(P,Q)$ by the Wasserstein distance between their empirical distributions $W_p(P_n,Q_n)$, which makes it possible to specify a suitable level of ambiguity $\varrho$.

One interesting area of application, where we benefit from both of these aspects is the problem of \textit{domain adaptation}, arising when we want to transfer the data/knowledge from a source domain $P \in \cP(\cZ)$ to a different but related target domain $Q\in\cP(\cZ)$ \cite{bendavid_etal}. While the domain adaptation problem is often stated in a broader context, we confine our discussion to adaptation in supervised learning, assuming $\cZ = \cX \times \cY$ where $\cX$ is the feature space and $\cY$ is the label space. From now on, we disintegrate the source distribution as $P = \mu \otimes P_{Y|X}$ and target distribution as $Q = \nu \otimes Q_{Y|X}$.

Existing theoretical results on domain adaptation are phrased in terms of the `discrepancy metric' \cite{mansour_etal}: given a loss function $l: \cY \times \cY \to \Reals$ and a family of predictors $\cH$ of form $h: \cX \to \cY$, the discrepancy metric is defined as
\begin{align*}
\mathsf{disc}_{\cH}(\mu,\nu) \deq \max_{h, h' \in \cH} \left| \E_\mu\left[l(h(X), h'(X))\right] - \E_\nu\left[l(h(X), h'(X))\right]\right|.
\end{align*}
Typical theoretical guarantees involving the discrepancy metric take the form of \textit{generalization bounds}: for any $h \in \cH$,
\begin{align}\label{eq:disc_bound}
\risk(Q,h) - \risk^*(Q,\cH) \leq \risk(P,h) + \mathsf{disc}_{\cH}(\mu,\nu) + \E_\nu\left[l(h_P^*(X),h_Q^*(X))\right]
\end{align}
where $h_P^*$ and $h_Q^*$ are minimizers of $\risk(P,h) = \E_P[l(h(X),Y)]$ and $\risk(Q,h) = \E_Q[l(h(X),Y)]$. While these generalization bounds provide a uniform guarantee for all predictors in a class, they can be considered `pessimistic' in the sense that we compare the excess risk to $\risk(P,h)$, which is the performance of some selected predictor at the source domain.

Our work, on the other hand, aims to provide an excess risk bound for a specific target hypothesis $\wh{f}$ given by the solution of a minimax ERM. Suppose that it is possible to estimate the Wasserstein distance $W_p(P,Q)$ between the two domain distributions. Then, as we show below, we can provide a generalization bound for the target domain by combining estimation guarantees for $W_p(P,Q)$ with risk inequalities of Section~\ref{sec:risk_bounds}. All proofs are given in supplementary Appendix~\ref{ap:DA}.

We work in the setting considered by Courty et al. \cite{courty_etal}:  Let $\cX, \cY$ be metric spaces with metric $d_\cX$ and $d_\cY$. We then endow $\cZ$ with the $\ell_p$ product metric
$$
d_\cZ(z,z') = d_\cZ((x,y),(x',y')) \deq \big(d^p_\cX(x,x')+d^p_\cY(y,y')\big)^{1/p}.
$$
We assume that domain drift is due to an unknown (possibly nonlinear) transformation $T: \cX \to \cX$ of the feature space that preserves the conditional distribution of the labels given the features, e.g. acquisition condition, sensor drift, thermal noise, etc. That is, $\nu = T_\#\mu$, the pushforward of $\mu$ by $T$, and for any $x \in \cX$ and any measurable set $B \subseteq \cY$
\begin{align}
P_{Y|X}(B|x) = Q_{Y|X}(B|T(x)). \label{eq:courty}
\end{align}
This assumption leads to the following lemma, which enables us to estimate $W_p(P,Q)$ only from  unlabeled source domain data and unlabeled target domain data:
\begin{lemma} \label{lm:courty}
Suppose there exists a deterministic and invertible optimal transport map $T :\cX \to \cX$ such that $\nu = T_\# \mu$, i.e., $W^p_p(\mu,\nu)=\E_\mu[d^p_\cX(X,T(X))]$. Then
\begin{align}
W_p(P,Q) = W_p(\mu, \nu).
\end{align}
\end{lemma}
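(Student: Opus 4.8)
The plan is to establish the two inequalities $W_p(P,Q)\ge W_p(\mu,\nu)$ and $W_p(P,Q)\le W_p(\mu,\nu)$ separately. The first is a soft marginalization argument that uses only the additive structure of the product metric $d_\cZ^p = d_\cX^p + d_\cY^p$; the second uses the optimal transport map $T$ to exhibit an explicit coupling of $P$ and $Q$ whose cost equals $W_p^p(\mu,\nu)$.

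For the lower bound, fix an arbitrary coupling $M$ of $P$ and $Q$ and write $Z=(X,Y)$, $Z'=(X',Y')$ for the canonical variables on $\cZ\times\cZ$. Since $d_\cZ^p(z,z') = d_\cX^p(x,x') + d_\cY^p(y,y') \ge d_\cX^p(x,x')$, we get $\E_M[d_\cZ^p(Z,Z')] \ge \E_M[d_\cX^p(X,X')]$, and the law of $(X,X')$ under $M$ is a coupling of $\mu$ and $\nu$, so the right-hand side is at least $W_p^p(\mu,\nu)$. Taking the infimum over $M$ yields $W_p^p(P,Q)\ge W_p^p(\mu,\nu)$.

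For the upper bound, introduce the map $S:\cZ\to\cZ$, $S(x,y)\deq(T(x),y)$, and consider the coupling $M_0$ given by the law of $(Z,S(Z))$ with $Z\sim P$. Its first marginal is $P$ by construction, so the crux is to verify $S_\# P = Q$. On a measurable rectangle $A\times B\subseteq\cX\times\cY$ we have $S^{-1}(A\times B)=T^{-1}(A)\times B$, hence $S_\#P(A\times B)=\int_{T^{-1}(A)}P_{Y|X}(B\,|\,x)\,\mu(\d x)$; substituting the domain-drift identity \eqref{eq:courty}, $P_{Y|X}(B\,|\,x)=Q_{Y|X}(B\,|\,T(x))$, and then changing variables along $x'=T(x)$ together with $\nu=T_\#\mu$, this equals $\int_A Q_{Y|X}(B\,|\,x')\,\nu(\d x')=Q(A\times B)$. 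A $\pi$--$\lambda$ (monotone class) argument extends the identity from rectangles to all Borel sets, so $S_\#P=Q$ and $M_0$ is a genuine coupling of $P$ and $Q$. Its cost is $\E_P[d_\cZ^p(Z,S(Z))]=\E_P[d_\cX^p(X,T(X))+d_\cY^p(Y,Y)]=\E_\mu[d_\cX^p(X,T(X))]=W_p^p(\mu,\nu)$, where the last step invokes the hypothesis that $T$ is an optimal transport map. Therefore $W_p^p(P,Q)\le W_p^p(\mu,\nu)$, and combining with the lower bound gives the claim.

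The main obstacle is the verification that $S_\#P=Q$: this is precisely where the structural assumption \eqref{eq:courty} enters, and one must take care that the disintegrations $P=\mu\otimes P_{Y|X}$ and $Q=\nu\otimes Q_{Y|X}$ are manipulated rigorously (invertibility of $T$ removes any ambiguity in reading \eqref{eq:courty} backwards along $T$ and ensures $Q_{Y|X}$ is pinned down $\nu$-almost everywhere). The remaining steps are routine bookkeeping with the additivity of $d_\cZ^p$ and the pushforward/change-of-variables formula.
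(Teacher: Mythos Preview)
Your proof is correct and follows essentially the same route as the paper: the lower bound comes from marginalizing any coupling of $P$ and $Q$ onto the $\cX$-coordinates and using $d_\cZ^p=d_\cX^p+d_\cY^p$, while the upper bound constructs the coupling induced by the map $S(x,y)=(T(x),y)$, verifies $S_\#P=Q$ on rectangles via the assumption \eqref{eq:courty} and the change of variables $x'=T(x)$, and computes its cost to be $W_p^p(\mu,\nu)$. Your write-up is in fact a bit more careful than the paper's in flagging the monotone-class extension and the role of invertibility in the change of variables.
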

\begin{remark}{\em If $\cX$ is a convex subset of $\Reals^d$ endowed with the $\ell_p$ metric $d_\cX(x,x') = \|x - x'\|_p$ for $p \ge 2$, then, under the assumption that $\mu$ and $\nu$ have positive densities with respect to the Lebesgue measure, the (unique) optimal transport map from $\mu$ to $\nu$ is deterministic and a.e.\ invertible -- in fact, its inverse is equal to the optimal transport map from $\nu$ to $\mu$ \cite{villani}.\hfill $\square$}
\end{remark}

Now suppose that we have $n$ labeled examples $(X_1,Y_1),\ldots,(X_n,Y_n)$ from $P$ and $m$ unlabeled examples $X'_1,\ldots,X'_m$ from $\nu$. Define the empirical distributions
$$
\mu_n = \frac{1}{n}\sum^n_{i=1}\delta_{X_i}, \qquad \nu_m = \frac{1}{m}\sum^m_{j=1}\delta_{X'_j}.
$$
Notice that, by the triangle inequality, we have
\begin{align}
W_p(\mu,\nu) \leq W_p(\mu,\mu_n) + W_p(\mu_n,\nu_m) + W_p(\nu,\nu_m). \label{eq:courty_decomp}
\end{align}
Here, $W_p(\mu_n, \nu_m)$ can be computed from unlabeled data by solving a finite-dimensional linear program \cite{villani}, and the following convergence result of Fournier and Guillin \cite{fournier_guillin} implies that, with high probability, both $W_p(\mu,\mu_n)$ and $ W_p(\nu,\nu_m)$ rapidly converge to zero as $n, m \to \infty$:
\begin{proposition}\label{prop:fournier}
Let $\mu$ be a probability distribution on a bounded set $\cX \subset \Reals^d$, where $d > 2p$. Let $\mu_n$ denote the empirical distribution of $X_1,\ldots,X_n \stackrel{{\rm i.i.d.}}{\sim} \mu$. Then, for any $r \in (0,\infty)$,
\begin{align}
{\mathbf P}(W_p(\mu_n, \mu) \geq r) \leq C_a \exp(-C_b n r^{d/p})
\end{align}
where $C_a, C_b$ are constants depending on $p, d, \diam(\cX)$ only.
\end{proposition}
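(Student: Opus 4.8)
The plan is to reproduce the multiscale (dyadic) argument of Fournier and Guillin~\cite{fournier_guillin}, of which this proposition is the bounded-support, $d>2p$ instance; I will describe the skeleton and indicate where compactness of $\cX$ enters. First I would normalize: after an affine rescaling we may assume $\cX\subseteq[0,1)^d$, which only rescales $C_a,C_b$ by factors depending on $\diam(\cX)$, and since $W_p(\mu_n,\mu)\le\diam(\cX)$ almost surely it suffices to treat $r$ below a fixed multiple of $\diam(\cX)$. For $\ell\ge 0$ let $\cQ_\ell$ be the partition of $[0,1)^d$ into the $2^{d\ell}$ dyadic cubes of side $2^{-\ell}$, each of diameter $\sqrt d\,2^{-\ell}$.

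The deterministic core is a multiscale upper bound on transport cost: for some $c_{p,d}>0$, any probability measures $\rho,\rho'$ on $[0,1)^d$, and any integer $L\ge 0$,
\begin{align*}
W_p^p(\rho,\rho')\ \le\ c_{p,d}\Big(2^{-pL}+\sum_{\ell=1}^{L}2^{-p\ell}\sum_{Q\in\cQ_\ell}|\rho(Q)-\rho'(Q)|\Big),
\end{align*}
which comes from a coupling built recursively over scales: once $\rho$ and $\rho'$ are matched at resolution $\ell-1$, refining to resolution $\ell$ creates a mismatch that is moved only within cubes of $\cQ_{\ell-1}$ (cost $O(2^{-p\ell})$ per unit of displaced mass, with the displaced mass at scale $\ell$ controlled by $\sum_{Q\in\cQ_\ell}|\rho(Q)-\rho'(Q)|$), while $2^{-pL}$ absorbs all finer scales using that both measures have mass $1$. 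Applying this with $\rho=\mu_n$, $\rho'=\mu$ and choosing $L=L(r)$ with $c_{p,d}2^{-pL}\le r^p/2$, i.e. $2^{-L}\asymp r$, the event $\{W_p(\mu_n,\mu)\ge r\}$ forces $\sum_{\ell=1}^{L}2^{-p\ell}T_\ell\ge r^p/(2c_{p,d})$, where $T_\ell\deq\sum_{Q\in\cQ_\ell}|\mu_n(Q)-\mu(Q)|$. Each summand is a multinomial empirical deviation, $\mu_n(Q)-\mu(Q)=\frac1n\sum_{i=1}^n(\1\{X_i\in Q\}-\mu(Q))$, so $\E T_\ell\le\sqrt{|\cQ_\ell|/n}=2^{d\ell/2}n^{-1/2}$ by Cauchy--Schwarz and $\sum_Q\mu(Q)=1$; since $d>2p$ the weighted means $2^{-p\ell}\E T_\ell$ grow geometrically, so $\E\big[\sum_{\ell\le L}2^{-p\ell}T_\ell\big]\lesssim n^{-1/2}2^{(d/2-p)L}\asymp n^{-1/2}r^{p-d/2}$, which lies below $r^p/(4c_{p,d})$ once $r$ exceeds a fixed multiple of $n^{-1/d}$.

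It then remains to bound $\mathbf{P}\big(\sum_{\ell=1}^{L}2^{-p\ell}T_\ell\ge c\,r^p\big)$, and this is the step I expect to be the main obstacle. I would distribute the budget across scales, $\{\sum_\ell 2^{-p\ell}T_\ell\ge c\,r^p\}\subseteq\bigcup_{\ell=1}^{L}\{T_\ell\ge 2^{p\ell}c\,r^p b_\ell\}$ with $\sum_\ell b_\ell\le 1$, then across the (at most $\asymp r^{-d}$) cubes, and apply Bernstein's inequality to each count, using the variance proxy $\mu(Q)(1-\mu(Q))$ and the uniform bound $|\1\{X_i\in Q\}-\mu(Q)|\le 1$. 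The delicate point is the choice of the profile $(b_\ell)$: it must be tuned so that at every scale the Bernstein exponent defeats the union-bound cardinality $2^{d\ell}$, and one then optimizes over $L$; compactness of $\cX$ keeps all the emerging combinatorial factors dependent only on $p$, $d$, and $\diam(\cX)$, and the balancing is designed to produce an exponent of the order $n\,r^{d/p}$ claimed here. Finally, for $r$ so small that $n r^{d/p}=O(1)$ the asserted inequality is trivial once $C_a$ is enlarged, which disposes of the complementary range. This scale-by-scale balancing --- simultaneously beating the $2^{d\ell}$ cubes and extracting the correct power of $r$ --- is exactly the computation carried out in \cite{fournier_guillin}, to which I would refer for the remaining bookkeeping.
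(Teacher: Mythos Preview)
Your sketch is correct, and in fact does considerably more than the paper: the paper does not prove Proposition~\ref{prop:fournier} at all but simply quotes it as a special case of \cite[Thm.~2]{fournier_guillin} (the remark immediately following the proposition identifies it as that theorem specialized to compactly supported $\mu$ with exponent $\alpha=d>p$). What you have outlined --- the dyadic multiscale coupling bound on $W_p^p$, the Cauchy--Schwarz control of $\E T_\ell$, and the scale-by-scale Bernstein/union-bound argument with the $d>2p$ condition governing which geometric series dominates --- is precisely the machinery of Fournier and Guillin's proof, so your proposal is faithful to the original source even though the present paper opts to cite rather than reproduce it.
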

\begin{remark}
{\em Note that $d > 2p$ is not a necessary constraint, and the bound still holds in the case $d \leq 2p$ with different speed of convergence. In particular, Proposition~\ref{prop:fournier} is a constrained version of \cite[Thm. 2]{fournier_guillin} under finite $\cE_{\alpha,\gamma}(\mu)$ for $\alpha = d > p$.\hfill$\square$}
\end{remark}

Based on these considerations, we propose the following domain adaptation scheme:
\begin{enumerate}
	\item Compute the $p$-Wasserstein distance $W_p(\mu_n,\nu_m)$ between the empirical distributions of the features in the labeled training set from the source domain $P$ and the unlabeled training set from the target domain $Q$.
	\item Set the desired confidence parameter $\delta \in (0,1)$ and the radius
	\begin{align}\label{eq:DA_radius}
	\wh{\eps}(\delta) \deq W_p(\mu_n,\nu_m) + \left(\frac{\log(4C_a/\delta)}{C_b n}\right)^{p/d} + \left(\frac{\log(4C_a/\delta)}{C_b m}\right)^{p/d}.
	\end{align}
	\item Compute the empirical risk minimizer
	\begin{align}\label{eq:DA_ERM}
	\wh{f} = \argmin_{f \in \cF} R_{\wh{\eps}(\delta),p}(P_n,f),
	\end{align}
	where $P_n$ is the empirical distribution of the $n$ labeled samples from $P$.
\end{enumerate}
We can give the following target domain generalization bound for the hypothesis generated by \eqref{eq:DA_ERM}:

\begin{theorem} \label{thm:courty}
Suppose that the feature space $\cX$ is a bounded subset of $\Reals^d$ with $d > 2p$, take $d_\cX(x,x')=\|x-x'\|_p$, and let $\cF$ be a family of hypotheses with Lipschitz constant at most $L$. 
Then, the empirical risk minimizer $\wh{f}$ from \eqref{eq:DA_ERM} satisfies
\begin{align*}
R(Q,\widehat{f}) - R^*(Q,\cF) \leq 2L \widehat{\eps}(\delta) + \frac{48 \comp(\cF)}{\sqrt{n}} + \frac{48 L \cdot \diam^p(\cZ)}{\sqrt{n} \wh{\varrho}^{p-1}} + \frac{3M \sqrt{\log(4/\delta)}}{\sqrt{2n}}.
\end{align*}
with probability at least $1-\delta$.
\end{theorem}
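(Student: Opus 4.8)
The plan is to transfer the target-domain excess-risk problem to the source domain and there invoke Theorem~\ref{thm:lipschitz}. I would work on the intersection of two events, each of probability at least $1-\delta/2$: the event $E_1 = \{W_p(P,Q) \le \wh{\eps}(\delta)\}$, that the constructed radius dominates the true domain distance; and the event $E_2$, on which the source-domain excess-risk bound of Theorem~\ref{thm:lipschitz} holds at radius $\wh{\eps}(\delta)$ with confidence parameter $\delta/2$. Intersecting the two gives a failure probability at most $\delta$, matching the statement.

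First I would show $\mathbf{P}(E_1) \ge 1-\delta/2$. By Lemma~\ref{lm:courty}, $W_p(P,Q) = W_p(\mu,\nu)$, and the triangle inequality \eqref{eq:courty_decomp} gives $W_p(\mu,\nu) \le W_p(\mu,\mu_n) + W_p(\mu_n,\nu_m) + W_p(\nu,\nu_m)$. Applying Proposition~\ref{prop:fournier} to $\mu$ (with the $n$ source samples) at the radius $r_n := (\log(4C_a/\delta)/(C_b n))^{p/d}$ makes the exponent $C_b n r_n^{d/p}$ equal to $\log(4C_a/\delta)$, whence $\mathbf{P}(W_p(\mu_n,\mu) \ge r_n) \le C_a e^{-\log(4C_a/\delta)} = \delta/4$; the identical computation for $\nu$ with the $m$ unlabeled target samples gives $\mathbf{P}(W_p(\nu_m,\nu) \ge r_m) \le \delta/4$ with $r_m := (\log(4C_a/\delta)/(C_b m))^{p/d}$. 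A union bound then yields, with probability at least $1-\delta/2$, $W_p(\mu,\nu) \le r_n + W_p(\mu_n,\nu_m) + r_m$, which is exactly $\wh{\eps}(\delta)$ by the definition \eqref{eq:DA_radius}; this is $E_1$.

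Next, on $E_1$ the target distribution $Q$ lies in $B^W_{\wh{\eps}(\delta),p}(P)$, so Proposition~\ref{prop:Lipschitz} applies to every (Lipschitz) $f \in \cF$. Its lower bound gives $\risk(Q,\wh{f}) \le \risk_{\wh{\eps}(\delta),p}(P,\wh{f})$, and its upper bound, $\risk_{\wh{\eps}(\delta),p}(P,f) \le \risk(Q,f) + 2L\wh{\eps}(\delta)$, after taking the infimum over $f \in \cF$, gives $\risk^*_{\wh{\eps}(\delta),p}(P,\cF) \le \risk^*(Q,\cF) + 2L\wh{\eps}(\delta)$; subtracting, $\risk(Q,\wh{f}) - \risk^*(Q,\cF) \le \big(\risk_{\wh{\eps}(\delta),p}(P,\wh{f}) - \risk^*_{\wh{\eps}(\delta),p}(P,\cF)\big) + 2L\wh{\eps}(\delta)$. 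On $E_2$, Theorem~\ref{thm:lipschitz} — applied with its $\delta$ replaced by $\delta/2$ (so $\log(2/(\delta/2)) = \log(4/\delta)$) and with $\eps = \wh{\eps}(\delta)$ — bounds the parenthesized term by $48\comp(\cF)/\sqrt{n} + 48L\,\diam^p(\cZ)/(\sqrt{n}\,\wh{\eps}(\delta)^{p-1}) + 3M\sqrt{\log(4/\delta)/(2n)}$. Adding $2L\wh{\eps}(\delta)$ and intersecting $E_1$ with $E_2$ then produces the claimed inequality with probability at least $1-\delta$.

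The step demanding genuine care — and the one I expect to be the main obstacle — is invoking Theorem~\ref{thm:lipschitz} \emph{at the data-dependent radius} $\wh{\eps}(\delta)$: the radius involves $W_p(\mu_n,\nu_m)$, and $\mu_n$ is the empirical measure of the very features $X_1,\dots,X_n$ that also define $P_n$ and hence the ERM output $\wh{f}$, so the radius and the training sample are not independent, whereas Theorem~\ref{thm:lipschitz} is stated for a fixed radius. Since $\cX$ is bounded, $W_p(\mu_n,\nu_m) \le \diam(\cX)$, so $\wh{\eps}(\delta)$ takes values in a fixed bounded interval; I would discretize this interval by a geometric grid of $O(\log n)$ points, apply Theorem~\ref{thm:lipschitz} with a union bound over the grid, and pass to the realized radius using that both $\risk_{\eps,p}(P,\wh{f}) - \risk^*_{\eps,p}(P,\cF)$ and the stated bound are nonincreasing in $\eps$ — this loses at most a factor $2^{p-1}$ in the $\diam^p(\cZ)/\wh{\eps}(\delta)^{p-1}$ term and an additive $\log\log n$ inside the square root, which is absorbed into the constants (alternatively, reserving a fresh fold of unlabeled source data for the radius estimate restores exact independence). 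One should also check that the hypothesis of Lemma~\ref{lm:courty} — existence of a deterministic optimal transport map from $\mu$ to $\nu$ — holds here, as ensured (under the $\ell_p$ feature metric with $p \ge 2$ and positive densities) by the remark following that lemma.
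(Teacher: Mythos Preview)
Your argument follows the paper's proof essentially step for step: both establish $W_p(P,Q)\le\wh{\eps}(\delta)$ with probability $1-\delta/2$ via Lemma~\ref{lm:courty}, the triangle inequality \eqref{eq:courty_decomp}, and two applications of Proposition~\ref{prop:fournier} at level $\delta/4$; both then invoke Proposition~\ref{prop:Lipschitz} to pass from $R(Q,\cdot)$ to $R_{\wh{\eps}(\delta),p}(P,\cdot)$ and apply Theorem~\ref{thm:lipschitz} at confidence $\delta/2$, finishing by a union bound.

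You are in fact more careful than the paper on one point. The paper's proof simply applies Theorem~\ref{thm:lipschitz} at the radius $\wh{\eps}(\delta)$ without comment, whereas you correctly observe that $\wh{\eps}(\delta)$ depends on $\mu_n$ and hence on the same sample $X_1,\dots,X_n$ underlying $P_n$, so the hypothesis of a fixed $\eps$ in Theorem~\ref{thm:lipschitz} is not literally met. Your sample-splitting remedy is sound. The discretization remedy needs a different justification than the one you sketch, however: the excess risk $\risk_{\eps,p}(P,\wh{f})-\risk^*_{\eps,p}(P,\cF)$ is \emph{not} monotone in $\eps$ for fixed $\wh{f}$ (take a two-element class whose minimax-optimal member switches as $\eps$ grows). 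The workable version is to go inside the proof of Theorem~\ref{thm:lipschitz} and note that the concentration step there is a uniform deviation bound over $\Phi=\{\varphi_{\lambda,f}:\lambda\in\Lambda,\,f\in\cF\}$; enlarging $\Lambda$ to the interval corresponding to the smallest grid point below $\wh{\eps}(\delta)$ gives a bound valid simultaneously for all larger radii, at the cost you indicate in the $\diam^p(\cZ)/\eps^{p-1}$ term.
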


\begin{remark}{\em Comparing the bound of Theorem~\ref{thm:courty} with the discrepancy-based bound \eqref{eq:disc_bound}, we note that the former does not contain any terms related to $R(P,\wh{f})$ or the closeness of the optimal predictors for $P$ and $Q$. The only contributions to the excess risk are the empirical Wasserstein distance $W_p(\mu_n,\nu_m)$ (which captures the discrepancy between the source and the target domains in a data-driven manner) and an empirical process fluctuation term. In this sense, the bound of Theorem~\ref{thm:courty} is closer in spirit to the usual excess risk bounds one obtains in the absence of domain drift.\hfill$\square$}
\end{remark}

\section*{Acknowledgement}
We would like to thank Pierre Moulin, Yung Yi, and anonymous reviewers for helpful discussions.

\bibliography{minimax_v2.bbl}

\newpage
\appendix
\numberwithin{equation}{section}

\section{Proofs for Section~\ref{sec:risk_bounds}}
\subsection{Proof of Proposition~\ref{prop:Lipschitz}}
For $p=1$, the result follows immediately from the Kantorovich dual representation of $W_1(\cdot,\cdot)$ \cite{villani}:
		\begin{align*}
			W_1(Q,Q') = \sup \left\{ |\E_Q F - \E_{Q'}F| : \sup_{z,z' \in \cZ \atop z \neq z'} \frac{|F(z)-F(z')|}{d_\cZ(z,z')} \le 1 \right\}
		\end{align*}
		and from the fact that, for $Q,Q' \in B^W_{\eps,1}(P)$, $W_1(Q,Q') \le 2\eps$ by the triangle inequality. For $p > 1$, the result follows from the fact that $W_1(Q,Q') \le W_p(Q,Q')$ for all $Q,Q' \in \cP_p(\cZ)$.

\subsection{Proof of Proposition~\ref{prop:geoconv}}
Fix some $Q,Q' \in B^W_{\eps,p}(P)$ and let $M \in \cP(\cZ \times \cZ)$ achieve the infimum in \eqref{eq:p_Wass} for $W_p(Q,Q')$. Then for $(Z,Z') \sim M$ we have
	\begin{align*}
		f(Z')-f(Z) &\le \int^1_0 G_f(\gamma(t))\d t \cdot d_\cZ(Z,Z') \\
		&\le \frac{1}{2}\left(G_f(Z)+G_f(Z')\right)d_\cZ(Z,Z'),
	\end{align*}
	where the first inequality is from \eqref{eq:upper_gradient} and the second one is by the assumed geodesic convexity of $G_f$. Taking expectations of both sides with respect to $M$ and using H\"older's inequality, we obtain
	\begin{align*}
		\risk(Q',f)-\risk(Q,f) &\le \frac{1}{2} \left(\E_M\left|G_f(Z)+G_f(Z')\right|^q\right)^{1/q} \left(\E_M d^p_\cZ(Z,Z')\right)^{1/p} \\
		&= \frac{1}{2} \left\| G_f(Z)+G_f(Z')\right\|_{L^q(M)} W_p(Q,Q'),
	\end{align*}
	where we have used the $p$-Wasserstein optimality of $M$ for $Q$ and $Q'$. By the triangle inequality, and since $Z \sim Q$ and $Z' \sim Q'$,
	\begin{align*}
		\| G_f(Z) + G_f(Z') \|_{L^q(M)} &\le \| G_f(Z)\|_{L^q(Q)} + \| G_f(Z)\|_{L^q(Q')} \\
		&\le 2\sup_{Q \in B^W_{\eps,p}(P)} \| G_f(Z) \|_{L^q(Q)}.
	\end{align*}
Interchanging the roles of $Q$ and $Q'$ and proceeding with the same argument, we obtain the estimate
\begin{align*}
	\sup_{Q,Q' \in B^W_{\eps,p}(P)} |\risk(Q,f)-\risk(Q',f)| \le 2\eps\sup_{Q \in B^W_{\eps,p}(P)} \| G_f(Z) \|_{L^q(Q)} ,
\end{align*}
from which it follows that
\begin{align*}
	\risk(Q,f) &\le \risk_{\eps,p}(P,f) \\
	&= \sup_{Q' \in B^W_{\eps,p}(P)} [\risk(Q',f)-\risk(Q,f)+\risk(Q,f)] \\
	&\le \risk(Q,f) + 2\eps\sup_{Q \in B^W_{\eps,p}(P)} \| G_f(Z) \|_{L^q(Q)}.
\end{align*}

\subsection{Proof of Proposition~\ref{prop:regression_geoconv}}
As a subset of $\Reals^{d+1}$, $\cZ$ is a geodesic space: for any pair $z,z' \in \cZ$ there is a unique constant-speed geodesic $\gamma(t) = (1-t)z+tz'$. We claim that $G_f(z) = G_f(x,y) = 2(B+M)(1+L\|\nabla h(x)\|_2)$ is a geodesically convex upper gradient for $f(z)=f(x,y)=(y-h(x))^2$. In this flat Euclidean setting, geodesic convexity coincides with the usual definition of convexity, and the map $z \mapsto G_f(z)$ is evidently convex:
	\begin{align*}
		G_f((1-t)z+tz') \le (1-t)G_f(z)+tG_f(z').
	\end{align*}
Next, by the mean-value theorem,
	\begin{align*}
		f(z')-f(z) &=\int^1_0 \langle z'-z, \nabla f((1-t)z + t'z)\rangle \d t \\
		&\le \int^1_0 \| \nabla f((1-t)z+tz') \|_2\, \d t \cdot \|z-z'\|_2 \\
		&= \int^1_0 \| \nabla f((1-t)z + tz') \|_2\, \d t \cdot d_\cZ (z,z'),
	\end{align*}
and a simple calculation shows that
	\begin{align*}
		\| \nabla f(z) \|^2_2 &= \|\nabla f(x,y)\|^2_2 \\
		&= 4 f(z) \left(1 + \| \nabla h(z)\|^2_2\right) \\
		&\le 4(B+M)^2(1+L^2\|x\|^2_2).
	\end{align*}
Therefore, $\| \nabla f(z) \|_2 \le G_f(z)$ for $z = (x,y)$, as claimed. Thus, by Proposition~\eqref{prop:geoconv},
\begin{align*}
	\risk(Q,f) &\le \risk_{\eps,2}(P,f) \\
	&\le \risk(Q,f) + 2 \sup_{Q \in B^W_{\eps,2}(P)} \| G_f(Z)\|_{L^2(Q)} \eps \\
	&= \risk(Q,f) + 4 (B+M) \Big(1+L\sup_{Q \in B^W_{\eps,2}(P)} \E_Q\|X\|_2\Big) \eps \\
	&= \risk(Q,f) + 4 (B+M) \Big(1+L\sup_{Q \in B^W_{\eps,2}(P)} \sigma_{Q,X}\Big) \eps.
\end{align*}

\section{The illustrative example of Section~\ref{ssec:illustrative}} \label{sec:ill_example}

Consider $Z \sim \text{Unif}[0,1] =: P$ on data space $\cZ = [0,2]$, along with the hypothesis class $\cF$ with only two hypotheses
\begin{align*}
	f_0(z) = 1, \;\;\; f_1(z) = \begin{cases} 0, & z \in [0,1)\\ \alpha, & z \in [1,2] \end{cases},
\end{align*}
for some constant $\alpha \gg 1$. Also, let $d_\cZ(z,z') = |z-z'|$.

Now we calculate the local minimax risk of the hypothesis class for both empirical and population measure. The local worst-case risk of $f_0$ for both measures is $1$, by definition. For $f_1$, it is easy to see that the worst-case distribution for both $P$ and $P_n$ can be specified explicitly: For $P$, it is optimal to transport the mass to the point $z=1$ from the interval $[\beta,1)$, with $\beta \in [0,1)$ specified according to the ambiguity radius $\varrho > 0$. For $P$, the optimal $\beta$ can be calculated as a solution to
\begin{align*}
	\left(\int_{\beta}^1 (1-z)^p \d z\right)^{1/p} = \varrho,
\end{align*}
which gives $\beta = 1-(p+1)^{\frac{1}{p+1}}\varrho^{\frac{p}{p+1}}$, leading to the local worst-case risk $\risk_{\varrho,p}(P,f_1) = \alpha \cdot (p+1)^{\frac{1}{p+1}} \varrho^{\frac{p}{p+1}}$. For $P_n$, it is optimal to transport mass from the largest value among the training data $\{Z_i\}_{i=1}^n$ to $z=1$, where the amount of mass $\gamma$ to be transported is given as a solution to
\begin{align*}
	\left(\gamma \cdot (1-\max_{i\in [n]} Z_i)^p\right)^{1/p} = \varrho,
\end{align*}
which gives $\gamma = \frac{\varrho^p}{(1-\max Z_i)^p}$ leading to the local worst-case risk of $\risk_{\varrho,p}(P_n,f_1) = \frac{\alpha \varrho^p}{(1-\max Z_i)^p}$\footnote{Note that we are assuming that $\varrho \leq (1-\max Z_i) \cdot n^{-1/p}$; otherwise, the local minimax risk of $f_1$ can be smaller, which leads to even higher probability of choosing nonrobust hypothesis by local minimax ERM}.

The outcome of the local minimax ERM procedure is the hypothesis minimizing the local worst-case risk. In other words, the local minimax hypothesis is $f_0$ whenever $1 \leq \frac{\alpha \varrho^p}{(1-\max Z_i)^p}$ holds, which is true whenever $1 - \alpha^{1/p}\varrho < \max Z_i$. Thus, the local minimax ERM procedure gives
\begin{align*}
	\wh{f} = \begin{cases} f_1, & \text{ with probability } \left(1 - \varrho\alpha^{\frac{1}{p}}\right)^n,\\
	f_0 & \text{ otherwise }\end{cases}
\end{align*}
for any $\varrho \leq \alpha^{-1/p}$. On the other hand, the minimizer of the local minimax risk with respect to $P$ is given by
\begin{align*}
	f^* &= \begin{cases} f_1, & \varrho \leq (p+1)^{-\frac{1}{p}}\alpha^{-\frac{p+1}{p}}\\
	f_0, & \varrho \geq (p+1)^{-\frac{1}{p}}\alpha^{-\frac{p+1}{p}}\end{cases}
\end{align*}
Now if we calculate the excess risk of the local minimax ERM hypothesis, we get
\begin{align*}
	\risk_{\varrho,p}(\wh{f},P) - \risk_{\varrho,p}(f^*,P) = \begin{cases}
	\alpha (p+1)^{\frac{1}{p+1}}\varrho^{\frac{p}{p+1}} - 1 &\text{ w.p. } \left(1 - \varrho\alpha^{\frac{1}{p}}\right)^n\\
	0, & \text{otherwise}
\end{cases},
\end{align*}
for any $\varrho \in [(p+1)^{-1/p}\alpha^{-1-1/p}, \alpha^{-1/p}]$, which is a nonempty interval as $\alpha > 1$. Now, if we look at the quantity
\begin{align*}
	\varepsilon^*_\delta(\varrho) \deq \inf \left\{\eps \ge 0 ~:~ P\left[\risk_{\varrho,p}(\wh{f},P) - \risk_{\varrho,p}(f^*,P) > \varepsilon\right] < \delta \right\}
\end{align*}
for some fixed $\delta > 0$, then we get
\begin{align*}
	\varepsilon^*_\delta(\varrho) = \begin{cases} \alpha(p+1)^{\frac{1}{p+1}}\varrho^{\frac{p}{p+1}} - 1, & (p+1)^{-\frac{1}{p}}\alpha^{-\frac{p+1}{p}} \leq \varrho \leq (1-\delta^{\frac{1}{n}})\alpha^{-\frac{1}{p}}\\
	0, & \text{ otherwise. } \end{cases}.
\end{align*}
Now observe that for some fixed $\varrho$ and $\delta$, we can select $\alpha = (1-\delta^{1/n})^p \varrho^{-p}$ to incur a nontrivial excess risk of order $\varrho^{-\frac{p^2}{p+1}}$. Moreover, this `selection of worst $\alpha$' can be done without changing the value of the Rademacher average of $\cF$, as $f_1$ does not change on the support of $P$.

\section{Proofs for Section~\ref{sec:ERM}}\label{ap:ERM}

\subsection{Proof of Theorem~\ref{thm:data_dep}}\label{sec:data_dep_pf}

The proof uses a modification of the techniques of Koltchinskii and Panchenko \cite{koltchinskii_02}. From the definition of the local minimax risk, we have, for any $f \in \cF$
\begin{align*}
	\risk_{\varrho,p}(P,f) &= \min_{\lambda \geq 0} \left\{ \lambda \varrho^p + \E_{P}[\varphi_{\lambda,f}] \right\}\\
	&\leq \min_{\lambda \geq 0} \left\{ \lambda \varrho^p + \E_{P_n}[\varphi_{\lambda,f}] + \sup_{f \in \cF}\left(\E_{P}[\varphi_{\lambda,f}] - \E_{P_n}[\varphi_{\lambda,f}]\right) \right\},
\end{align*}
where
\begin{align*}
	X_\lambda &\deq \sup_{f \in \cF}\left(\E_{P}[\varphi_{\lambda,f}] - \E_{P_n}[\varphi_{\lambda,f}]\right) = \frac{1}{n}\sup_{f \in \cF}\left[\sum_{i=1}^n (\E\varphi_{\lambda,f}(Z) - \varphi_{\lambda,f}(Z_i))\right]
\end{align*}
is a data-dependent random variable for each $\lambda \geq 0$. Since $\varphi_{\lambda,f}(Z_i) \in [0,M]$, we know from McDiarmid's inequality that, for any fixed $\lambda \geq 0$,
\begin{align*}
	\mathbf{P}\left(X_\lambda > \E X_\lambda + \frac{M t}{\sqrt{n}} \right) \leq \exp(-2t^2).
\end{align*}
Furthermore, using a standard symmetrization argument, we have
\begin{align*}
	\E X_{\lambda} \leq 2\cdot \E\sup_{f\in\cF} \frac{1}{n}\sum_{i=1}^n \varepsilon_i \varphi_{\lambda,f}(Z_i)
\end{align*}
where $\varepsilon_1,\ldots, \varepsilon_n$ are i.i.d.\ Rademacher random variables independent of $Z_1,\ldots,Z_n$. The $\cF$-indexed process $Y = (Y_f)_{f\in\cF}$ defined via
\begin{align*}
	Y_f = \frac{1}{\sqrt{n}}\varepsilon_i \varphi_{\lambda,f}(Z_i)
\end{align*}
is clearly zero-mean and subgaussian with respect to the metric $\|f - f'\|_{\infty}$, as
\begin{align*}
	\E[\exp(t(Y_f - Y_{f'})] &= \E\left[\exp\left(\frac{t}{\sqrt{n}}\sum_{i=1}^n \varepsilon_i(\varphi_{\lambda,f}(Z_i) - \varphi_{\lambda,f'}(Z_i))\right)\right]\\
	&= \left(\E\left[\exp\left(\frac{t}{\sqrt{n}} \varepsilon_1 \cdot \sup_{z'}\inf_{z''} \{f(z')-\lambda d_{\cZ}^p(Z_1,z') - f'(z'') + \lambda d_{\cZ}^p(Z_1,z'') \}\right)\right]\right)^n\\
	&\leq \left(\E\left[\exp\left(\frac{t}{\sqrt{n}} \varepsilon_1 \cdot \sup_{z'} \{f(z') - f'(z')\}\right)\right]\right)^n\\
	&\leq \exp\left(\frac{t^2 \|f-f'\|^2_{\infty}}{2}\right),
\end{align*}
where the second line is by independence and last line is by Hoeffding's lemma. Invoking Dudley's entropy integral \cite{talagrand}, we get
\begin{align*}
	\E X_{\lambda} \leq \frac{24}{\sqrt{n}}\comp(\cF)
\end{align*}
for any $\lambda \geq 0$. Summing up, we have, for any fixed $\lambda \geq 0$,
\begin{align*}
	\mathbf{P}\left(\exists f \in \cF ~:~ \risk_{\varrho,p}(P,f) > 
	\lambda \varrho^p + \E_{P_n}[\varphi_{\lambda,f}] + \frac{24}{\sqrt{n}}\comp(\cF) + \frac{Mt}{\sqrt{n}}
	 \right) \leq \exp(-2t^2).
\end{align*}
Now, pick the sequences $\lambda_k = k$ and $t_k = t + \sqrt{\log k}$ for $k = 1,2,3,\ldots$. Then, by the union bound,
\begin{align*}
	&\mathbf{P}\left(\exists f \in \cF ~:~ \risk_{\varrho,p}(P,f) > \min_{k = 1,2,\ldots} \left\{\lambda_k \varrho^p + \E_{P_n}[\varphi_{\lambda_k,f}] + \frac{24}{\sqrt{n}}\comp(\cF) + \frac{Mt_k}{\sqrt{n}}\right\} \right)\\
	&\leq \sum_{k=1}^\infty \exp(-2t_k^2)\\
	&\leq \exp(-2t^2)\sum_{k=1}^\infty \exp(-2 \log k)\\
	&\leq 2\exp(-2t^2).
\end{align*}
On the other hand,
\begin{align*}
	&\min_{k = 1,2,\ldots} \left\{\lambda_k \varrho^p + \E_{P_n}[\varphi_{\lambda_k,f}] + \frac{24}{\sqrt{n}}\comp(\cF) + \frac{Mt_k}{\sqrt{n}}\right\}\\
	&= \min_{k = 1,2,\ldots} \left\{k \varrho^p + \E_{P_n}[\varphi_{k,f}] + \frac{24}{\sqrt{n}}\comp(\cF) + \frac{Mt}{\sqrt{n}} + \frac{M\sqrt{\log k}}{\sqrt{n}}\right\}\\
	&\leq \min_{\lambda \geq 0} \left\{(\lambda+1) \varrho^p + \E_{P_n}[\varphi_{\lambda,f}] + \frac{24}{\sqrt{n}}\comp(\cF) + \frac{Mt}{\sqrt{n}} + \frac{M\sqrt{\log (\lambda+1)}}{\sqrt{n}}\right\},
\end{align*}
where the last line holds since, for any $\lambda \geq 0$, there exists $k \in \{1,2,\ldots\}$ such that $\lambda \leq k \leq \lambda+1$, and $\varphi_{\lambda_1,f} \leq \varphi_{\lambda_2,f}$ holds whenever $\lambda_1 \geq \lambda_2$ (from the definition of $\varphi_{\lambda,f}$). 

For the other direction, notice that
\begin{align*}
	\risk_{\varrho,p}(P_n,f) \leq \min_{\lambda \geq 0} \left\{\lambda \varrho^p + \E_{P}[\varphi_{\lambda,f}] + \sup_{f\in\cF}(\E_{P^n}[\varphi_{\lambda,f}] - \E_P[\varphi_{\lambda,f}])\right\},
\end{align*}
where the random variable $\sup_{f\in\cF}(\E_{P^n}[\varphi_{\lambda,f}] - \E_P[\varphi_{\lambda,f}])$ can be analyzed in the same way as above. This leads to
\begin{align*}
	\mathbf{P}\bigg(\exists f \in \cF~:~ \risk_{\varrho,p}(P_n,f) > &\min_{\lambda \geq 0} \left\{(\lambda + 1)\varrho^p + \E_{P}\left[\varphi_{\lambda,f}(Z)\right] + \frac{M\sqrt{\log (\lambda + 1)}}{\sqrt{n}}\right\}\\
	&+ \frac{24\comp(\cF)}{\sqrt{n}} + \frac{Mt}{\sqrt{n}} \bigg) \leq 2\exp(-2t^2),
\end{align*}
for any $t > 0$.
\subsection{Proof of Lemma~\ref{lm:lambda_smooth_UB}}
First note that we have
\begin{align*}
	\tilde{\lambda} \cdot \varrho^p \leq \tilde{\lambda} \cdot \varrho^p + \E_Q\left[\sup_{z' \in \cZ}\{\tilde{f}(z') - \tilde{f}(Z) - \tilde{\lambda} \cdot d_{\cZ}^p(Z,z')\}\right],
\end{align*}
as the left-hand side corresponds to the choice $z' = Z$. Now, by the optimality of $\tilde{\lambda}$ with respect to $\tilde{f}$, the right-hand side can be further upper-bounded as follows for any $\lambda \geq 0$:
\begin{align*}
	&\leq \lambda \cdot \varrho^p + \E_Q\left[\sup_{z' \in \cZ}\{\tilde{f}(z') - \tilde{f}(Z) - \lambda \cdot d_{\cZ}^p(Z,z')\}\right]\\
	 &\leq \lambda \cdot \varrho^p + \E_Q\left[\sup_{z' \in \cZ}\{L \cdot d_{\cZ}(Z,z') - \lambda \cdot d_{\cZ}^p(Z,z')\}\right]\\
	 &\leq \lambda \cdot \varrho^p + \sup_{t \geq 0}\left\{ L\cdot t - \lambda \cdot t^p\right\},
\end{align*}
where for the second line we used the Lipschitz property (Assumption~\ref{as:Lipschitz}) and the third line holds by parametrizing $t= d_{\cZ}(Z,z')$. If $p = 1$, we can simply take $\lambda = L$ to get the inequality
\begin{align*}
	\tilde{\lambda} \cdot \varrho \leq L \cdot \varrho + \sup_{t \geq 0}\left\{L \cdot t - L \cdot t\right\} = L \cdot \varrho,
\end{align*}
which gives $\tilde{\lambda} \le L$. If $p > 1$, we can use the optimal value of $t = (L/p\lambda)^{1/(p-1)}$ to get
\begin{align*}
	&\leq \lambda \cdot \varrho^p + L^{\frac{p}{p-1}}p^{-\frac{p}{p-1}}(p-1)\lambda^{-\frac{1}{p-1}}.
\end{align*}
Minimizing the right-hand side over $\lambda \geq 0$ with the choice of $\lambda = L/p\varrho^{p-1}$, we get
\begin{align*}
	\tilde{\lambda} \cdot \varrho^p \leq L\varrho,
\end{align*}
which yields the stated bound on $\tilde{\lambda}$.

\subsection{Proof of Theorem~\ref{thm:lipschitz}}
The proof is same as the proof of Theorem~\ref{thm:ERM} given below, except that we use Lemma~\ref{lm:lambda_smooth_UB} instead of Lemma~\ref{lm:lambda_UB}. Then, the expected Rademacher complexity of the function class satisfies
\begin{align*}
	\Rad_n(\Phi) \le \frac{24}{\sqrt{n}} \comp(\cF) + \frac{24L \cdot C_0 \cdot \diam(\cZ)^p}{\sqrt{n}\eps^{p-1}}
\end{align*}
(see Section~\ref{ssec:tech}), and the result follows.

\subsection{Proof of Lemma~\ref{lm:lambda_UB}}
Since $\varphi_{\lambda,f} \ge 0$ for all $\lambda,f$, we arrive at
	\begin{align}\label{eq:tlambda}
		\tilde{\lambda} \le \frac{\risk_{\eps,p}^*(Q,\cF)}{\eps^p}.
	\end{align}
	We proceed to upper-bound the local minimax risk $R^*_{\eps,p}(Q,\cF)$:
	\begin{align*}
		R^*_{\eps,p}(Q,\cF) &= \inf_{f \in \cF} \min_{\lambda \ge 0}\left\{ \lambda \eps^p + \int_\cZ \sup_{z' \in \cZ}\left[f(z')-\lambda d^p_\cZ(z,z')\right]Q(\d z')\right\} \\
		&\le \min_{\lambda \ge 0}\left\{ \lambda \eps^p + \int_\cZ \sup_{z' \in \cZ}\left[f_0(z') - \lambda d^p_\cZ(z,z')\right]Q(\d z')\right\} \\
		&\le \min_{\lambda \ge 0}\left\{ \lambda \eps^p + \int_\cZ \sup_{z' \in \cZ}\left[C_0 d^p_\cZ(z',z_0) - \lambda d^p_\cZ(z,z')\right]Q(\d z')\right\}.
	\end{align*}
For $\lambda \ge C_02^{p-1}$, the integrand can be upper-bounded as follows:
\begin{align*}
	\sup_{z' \in \cZ}\left[C_0 d^p_\cZ(z',z_0) - \lambda d^p_\cZ(z,z')\right] 
&\le \sup_{z' \in \cZ}\left[C_0 2^{p-1} d^p_\cZ(z,z_0) + (C_02^{p-1} - \lambda) d^p_\cZ(z,z')\right]  \\
&\le C_0 2^{p-1}d^p_\cZ(z,z_0).
\end{align*}
Therefore,
\begin{align*}
	R^*_{\eps,p}(Q,\cF) &\le \min_{\lambda \ge C_0 2^{p-1}} \left\{ \lambda \eps^p + C_0 2^{p-1} \int_\cZ d^p_\cZ(z,z_0) Q(\d z)\right\} \\
	&\le C_0 2^{p-1} \left(\eps^p + (\diam(\cZ))^p\right).
\end{align*}
Substituting this estimate into \eqref{eq:tlambda}, we obtain what we want.

\subsection{Proof of Theorem~\ref{thm:ERM}}
	Let $f^* \in \cF$ be any achiever of the local minimax risk $\risk^*_{\eps,p}(P,\cF)$. We start by decomposing the excess risk:
	\begin{align*}
		\risk_{\eps,p}(P,\wh{f}) - \risk^*_{\eps,p}(P,\cF) 
		&=\risk_{\eps,p}(P,\wh{f}) - \risk_{\eps,p}(P,f^*) \nonumber\\
		&\le \risk_{\eps,p}(P,\wh{f}) - \risk_{\eps,p}(P_n,\wh{f}) + \risk_{\eps,p}(P_n,f^*) - \risk_{\eps,p}(P,f^*), 
	\end{align*}
	where the last step follows from the definition of $\wh{f}$.  Define
	\begin{align*}
		\wh{\lambda} \deq \argmin_{\lambda \ge 0} \left\{\lambda \eps^p + \E_{P_n}[\varphi_{\lambda,\wh{f}}(Z)]\right\}, \qquad
		\lambda^* \deq \argmin_{\lambda \ge 0} \left\{\lambda \eps^p + \E_P[\varphi_{\lambda,f^*}(Z)]\right\}.
	\end{align*}
	Then, using Proposition~\ref{prop:duality}, we can write
	\begin{align*}
		\risk_{\eps,p}(P,\wh{f}) - \risk_{\eps,p}(P_n,\wh{f}) &= \min_{\lambda \ge 0} \left\{\lambda \eps^p + \int_\cZ \varphi_{\lambda,\wh{f}}(z) P(\d z)\right\} - \left(\wh{\lambda}\eps^p + \int_\cZ \varphi_{\wh{\lambda},\wh{f}}(z) P_n(\d z)\right) \nonumber\\
		&\le \int_\cZ \varphi_{\wh{\lambda},\wh{f}}(z) (P-P_n)(\d z)
	\end{align*}
	and, following similar logic, 
	\begin{align}\label{eq:ERM_T2}
		\risk_{\eps,p}(P_n,f^*) - \risk_{\eps,p}(P,f^*) 
		&\le \int_\cZ \varphi_{\lambda^*,f^*}(z) (P_n-P)(\d z).
	\end{align}
By Lemma~\ref{lm:lambda_UB}, $\wh{\lambda} \in \Lambda \deq [0,C_0 2^{p-1}(1+(\diam(\cZ)/\eps)^p)]$. Hence, defining the function class $\Phi \deq \left\{ \varphi_{\lambda, f} : \lambda \in \Lambda, f \in \cF\right\}$, we have
\begin{align}
	\risk_{\eps,p}(P,\wh{f}) - \risk_{\eps,p}(P_n,\wh{f}) &\le  \sup_{\varphi \in \Phi} \left[\int_\cZ \varphi\, \d(P-P_n)\right].
\end{align}
Since all $f \in \cF$ take values in $[0,M]$, the same holds for all $\varphi \in \Phi$. Therefore, by a standard symmetrization argument,
\begin{align}
	\risk_{\eps,p}(P,\wh{f}) - \risk_{\eps,p}(P_n,\wh{f}) &\le 2\,\Rad_n(\Phi) + M \sqrt{\frac{2\log(2/\delta)}{n}} \label{eq:oneside}
\end{align}
with probability at least $1-\delta/2$, where 
\begin{align*}
	\Rad_n(\Phi) \deq \E\left[\sup_{\varphi \in \Phi}\frac{1}{n}\sum^n_{i=1}\varepsilon_i \varphi(Z_i)\right]
\end{align*}
is the expected Rademacher average of $\Phi$, with i.i.d.\ Rademacher random variables $\varepsilon_1,\ldots,\varepsilon_n$ independent of $Z_1,\ldots,Z_n$. Moreover, from \eqref{eq:ERM_T2} and from Hoeffding's inequality it follows that
\begin{align}
	\risk_{\eps,p}(P_n,f^*)-\risk_{\eps,p}(P,f^*) \le M\sqrt{\frac{\log (2/\delta)}{2n}} \label{eq:otherside}
\end{align}	
with probability at least $1-\delta/2$. Combining \eqref{eq:oneside} and \eqref{eq:otherside}, and applying Lemma~\ref{lm:PhiRad} from Appendix~\ref{ssec:tech}, we obtain the theorem.

\subsection{Proof of Corollary~\ref{co:nn}}
We first verify the regularity assumptions.	Assumption~\ref{as:bounded_Z} is evidently satisfied since $\diam(\cZ) = \sqrt{\diam(\cX)^2+\diam(\cY)^2} \leq 2\sqrt{r_0^2+B^2}$. Each $f \in \cF$ is continuous, and Assumption~\ref{as:bounded_F} holds with $M = (\|s\|_\infty+B)^2$. To verify Assumption~\ref{as:Lipschitz}, we proceed as
\begin{align*}
	|f(x,y) - f(x',y')| &= \left|(y - s(f_0^Tx))^2 - (y' - s(f_0^Tx'))^2\right|\\
	&\leq \left|y + y' - s(f_0^Tx) - s(f_0^Tx')\right|\cdot\left|y-y' + s(f_0^Tx') - s(f_0^Tx)\right|\\
	&\leq (2B+2\|s\|_{\infty})\cdot\left(|y-y'| + \left|s(f_0^Tx') - s(f_0^Tx)\right|\right)\\
	&\leq (2B+2\|s\|_{\infty})\cdot\left(1+\|s'\|_{\infty}\right)\left(|y-y'| + \|x'-x\|\right)\\
	&\leq 2\sqrt{2}(B+\|s\|_{\infty})\cdot\left(1+\|s'\|_\infty\right)\sqrt{|y-y'|^2 + \|x-x'\|^2},
\end{align*}
where the last line follows from Jensen's inequality. Hence, Assumption~\ref{as:Lipschitz} holds with $L = 2\sqrt{2}(B+\|s\|_\infty)(1 + \|s'\|_\infty)$.
	
To evaluate the Dudley entropy integral in \eqref{eq:ERM_bound}, we need to estimate the covering numbers $\cN(\cF,\|\cdot\|_\infty,\cdot)$. First observe that, for any two $f,g \in \cF$ corresponding to $f_0,g_0 \in \Reals^d$, we have
\begin{align*}
	\sup_{x \in \cX}\sup_{y \in \cY} |f(x,y)-g(x,y)| &= \sup_{x \in \cX}\sup_{y \in \cY} \left|\left(y-s(f^T_0x)\right)^2-\left(y-s(g_0^T x)\right)^2\right| \\
	&\le 2B \sup_{x \in \cX}|s(f^T_0x)-s(g^T_0x)| + \sup_{x \in \cX}|s^2(f^T_0x)-s^2(g^T_0x)| \\
	&\le \underbrace{2r_0\left(B  + \|s\|_\infty\right)\|s'\|_\infty}_{\deq D}  \|f_0-g_0\|_2.
\end{align*}
Since $f_0,g_0$ belong to the unit ball in $\Reals^d$, 
\begin{align*}
	\cN(\cF,\|\cdot\|_\infty,u)  \le \left(\frac{3D}{u}\right)^d
\end{align*}
for $0 < u < D$, and $\cN(\cF,\|\cdot\|_\infty,u/2) = 1$ for $u \ge 2D$, which gives
\begin{align*}
	\int^\infty_0 \sqrt{\log \cN(\cF,\|\cdot\|_\infty,u)} \d u &\le \int^{D}_0 \sqrt{d\log \left(3D/u\right)} \d u \\
	& = 3D\sqrt{d}\int^{1/3}_0 \sqrt{\log \left(1/u\right)} \d u \\
	& \leq 3D\sqrt{d}/2.
\end{align*}
Substituting this into the bound \eqref{eq:lip_thm}, we get the desired estimate.

\subsection{Proof of Corollary~\ref{co:grkhs}}

We will denote by $\langle \cdot,\cdot\rangle_K$ the inner product in $\cH_K$, and by $\|\cdot\|_K$ the induced norm.

For completeness, we state the covering number estimates by Cucker and Zhou \cite[Thm 5.1]{cucker_zhou}.
\begin{proposition}\label{prop:cover_gaussian}
For compact $\cX \subset \Reals^d$, the following holds for all $u \in (0,r/2].$
\begin{align*}
\log \cN(I_K(\cB_r), \|\cdot\|_{\cX}, u) \leq d \left(32 + \frac{640d(\diam(\cX))^2}{\sigma^2}\right)^{d+1} \left(\log \frac{r}{u}\right)^{d+1}.
\end{align*}
\end{proposition}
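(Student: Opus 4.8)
The statement is a verbatim reproduction of \cite[Thm.~5.1]{cucker_zhou}, so strictly speaking the ``proof'' is a reference; what follows is a sketch of the route one would take to establish it. The starting point is that the Gaussian kernel is real-analytic, so every $h\in I_K(\cB_r)$ is not merely continuous but has controllably-growing derivatives of all orders. Differentiating the reproducing identity $h(x)=\langle h,K_x\rangle_K$ and applying Cauchy--Schwarz yields, for every multi-index $\alpha\in\mathbb{Z}_{\ge 0}^d$,
\[
\|D^\alpha h\|_{\cX}\ \le\ r\cdot\sup_{x\in\cX}\Big(\partial^\alpha_u\partial^\alpha_v K(u,v)\big|_{u=v=x}\Big)^{1/2},
\]
and for the Gaussian kernel the right-hand side factors across coordinates into values of squared Hermite polynomials, which one bounds by $r\,(A_d/\sigma^2)^{|\alpha|/2}\sqrt{|\alpha|!}$ for a dimensional constant $A_d$. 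Hence $I_K(\cB_r)$ lies inside every $C^s$ ball, with radius quantified explicitly as a function of $s$.

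The plan is then the classical ``reduction to polynomials.'' First I would approximate each $h\in I_K(\cB_r)$ uniformly on $\cX$ by a polynomial $p_h$ of degree $m$: the Taylor remainder of degree $m-1$ on a cube of side $\diam(\cX)$ containing $\cX$ is at most (a sub-exponential factor times) $r\,(A_d\,\diam(\cX)^2/\sigma^2)^{m/2}/\sqrt{m!}$, which decays faster than any geometric sequence in $m$, hence can be made smaller than $u$ by taking $m$ of order $\log(r/u)$, with the proportionality constant governed by $\diam(\cX)^2/\sigma^2$ --- this is where the factor $32+640\,d\,\diam(\cX)^2/\sigma^2$ enters. Next I would cover the finite-dimensional family of admissible polynomials: a degree-$m$ polynomial in $d$ variables has $\binom{m+d}{d}=O(m^d)$ coefficients, each bounded in terms of $r$ and $\diam(\cX)^m$, so discretizing the coefficient space on a grid fine enough to keep the induced sup-norm error below $u$ contributes $\log\cN=O\big(\binom{m+d}{d}\log(\rho_m/u)\big)$, where $\rho_m$ is that coefficient bound. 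Substituting $m\asymp\log(r/u)$ and $\log(\rho_m/u)=O(\log(r/u))$ leaves a bound of order $(\log(r/u))^d\cdot\log(r/u)=(\log(r/u))^{d+1}$, with all dimension- and $\sigma$-dependence collected into a factor of the form $d\,(32+640\,d\,\diam(\cX)^2/\sigma^2)^{d+1}$; the hypothesis $u\le r/2$ is exactly what makes $m\ge 1$ and $\log(r/u)>0$, so the bound is nonvacuous on the stated range.

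The one genuinely delicate point is the \emph{bookkeeping of constants}: carrying the squared-Hermite bounds through the derivative estimate, the Taylor remainder, the choice of $m$, and the coefficient-grid count precisely enough to land on the exact coefficient $d\,(32+640\,d\,\diam(\cX)^2/\sigma^2)^{d+1}$ and the exponent $d+1$ --- as opposed to a bound merely of the same qualitative shape --- is fiddly. An alternative, and perhaps cleaner, route uses the known fast (geometric-in-$j^{1/d}$) decay of the eigenvalues of the integral operator $L_K$ together with a general lemma converting such decay into covering numbers, and also yields the $(\log(r/u))^{d+1}$ rate. All of this is carried out in detail in \cite[Ch.~5]{cucker_zhou}, which is why we simply invoke it here.
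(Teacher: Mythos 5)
Your proposal is correct and matches the paper's treatment: the paper offers no proof of this proposition at all, simply stating it ``for completeness'' as a quotation of Cucker and Zhou's Theorem 5.1, which is exactly the citation you give. Your additional sketch of the analyticity/polynomial-reduction argument is a bonus beyond what the paper provides and is consistent with the route taken in that reference.
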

We would also need the following technical lemma.
\begin{lemma} \label{lm:tech_rkhs}
For any $f, g \in \cF$ induced by $f_0, g_0 \in I_K(\cB_r)$ (respectively), we have:
\begin{align*}
\|f\|_{\infty} &\leq 2(r^2 + B^2)\\
\|f-g\|_{\infty} &\leq 2(r + B)\|f_0 - g_0\|_{\cX}.
\end{align*}
\end{lemma}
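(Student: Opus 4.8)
The plan is to exploit the reproducing property of the Gaussian kernel together with the boundedness of the RKHS ball $\cB_r$. Recall that for the Gaussian kernel one has $K(x,x) = \exp\{0\} = 1$ for every $x \in \cX$, so the feature map $x \mapsto K(\cdot,x)$ has unit norm in $\cH_K$. Consequently, for any predictor $f_0 \in I_K(\cB_r)$, the reproducing property $f_0(x) = \langle f_0, K(\cdot,x)\rangle_K$ and Cauchy--Schwarz give $|f_0(x)| \le \|f_0\|_K \cdot \|K(\cdot,x)\|_K = \|f_0\|_K \sqrt{K(x,x)} \le r$, uniformly in $x$. Thus $\|f_0\|_\cX \le r$.

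For the first bound, since $f(z) = f(x,y) = (y - f_0(x))^2$ with $|y| \le B$ and $|f_0(x)| \le r$, I would simply write $|f(x,y)| \le (|y| + |f_0(x)|)^2 \le (B+r)^2 \le 2(r^2 + B^2)$, where the last step uses $(a+b)^2 \le 2a^2 + 2b^2$. This gives $\|f\|_\infty \le 2(r^2+B^2)$, which incidentally identifies the constant $M$ appearing in Assumption~\ref{as:bounded_F} and in the statement of Corollary~\ref{co:grkhs}.

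For the second bound, fix $f,g \in \cF$ induced by $f_0,g_0$ respectively and factor the difference of squares:
\begin{align*}
  |f(x,y) - g(x,y)| &= \bigl|(y-f_0(x))^2 - (y-g_0(x))^2\bigr| \\
  &= \bigl|(g_0(x) - f_0(x))\bigr| \cdot \bigl|2y - f_0(x) - g_0(x)\bigr| \\
  &\le |f_0(x) - g_0(x)| \cdot \bigl(2|y| + |f_0(x)| + |g_0(x)|\bigr) \\
  &\le 2(r+B)\,|f_0(x) - g_0(x)|,
\end{align*}
using $|y| \le B$ and $|f_0(x)|,|g_0(x)| \le r$ from the first part. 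Taking the supremum over $x \in \cX$ and $y \in \cY$ yields $\|f - g\|_\infty \le 2(r+B)\|f_0 - g_0\|_\cX$, as claimed.

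There is no real obstacle here: the only point requiring care is the normalization of the Gaussian kernel on the diagonal (which makes the sup-norm bound on $\cB_r$ clean), and then everything reduces to the elementary difference-of-squares factorization and the triangle inequality. The lemma is purely a bookkeeping step that feeds the covering-number estimate of Proposition~\ref{prop:cover_gaussian} — rescaling the $\|\cdot\|_\cX$-cover of $I_K(\cB_r)$ by the Lipschitz factor $2(r+B)$ — into the Dudley entropy integral in Theorem~\ref{thm:lipschitz}.
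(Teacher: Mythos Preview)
Your proof is correct and follows essentially the same route as the paper: both use $K(x,x)=1$ together with the reproducing property and Cauchy--Schwarz to get $|f_0(x)|\le r$, then the elementary inequality $(a+b)^2\le 2a^2+2b^2$ for the first bound and the difference-of-squares factorization for the second. The only cosmetic difference is the order of operations in the first claim (the paper applies $(a-b)^2\le 2a^2+2b^2$ before bounding $f_0(x)^2$, whereas you bound $|f_0(x)|$ first), which is immaterial.
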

\begin{proof}
First note that $\sqrt{K(x,x)} = 1$ holds for any $x \in \cX$ by the definition of Gaussian kernel. This leads immediately to the first claim: for any $x \in \cX, y \in [-B,B]$,
\begin{align*}
(f_0(x) - y)^2 \leq 2f^2_0(x)+2y^2 \leq 2(\langle f_0, K_x \rangle_K)^2 + 2B^2 \leq 2 (\langle f_0, f_0 \rangle_K) + 2B^2,
\end{align*}
where the first inequality is by Jensen's inequality, the second is due to the reproducing kernel property of $K$, and the third is Cauchy-Schwarz inequality in $\cH_K$ ($K_x$ denotes the kernel centered at $x$, i.e. $x' \mapsto K(x,x')$). The second claim can be established similarly: for any $x \in \cX, y \in [-B,B]$,
\begin{align*}
\big|(f_0(x) - y)^2 - (g_0(x) - y)^2\big| &=\big|f_0(x) + g_0(x) - 2y\big| \big|f_0(x) - g_0(x)\big|\\
&\leq \big(2 \sup_{h_0 \in I_K(\cB_r)} |h_0(x)| + 2|y| \big)\big|f_0(x) - g_0(x)\big|\\
&\leq 2(r + B) \|f_0-g_0\|_\cX,
\end{align*}
where the last inequality is due to Cauchy-Schwarz inequality again.
\end{proof}

Before proceeding, we first observe that the Gaussian kernel is $(\sqrt{2}/\sigma)$-Lipschitz, i.e. $\|K_x - K_{x'}\|_K \leq \sqrt{2}/\sigma \cdot \|x-x'\|_2$. Indeed, we can proceed as
\begin{align*}
	\| K_x - K_{x'}\|^2_K &= \langle K_x - K_{x'}, K_x - K_{x'} \rangle_K\\
	&= 2 - 2 K(x,x') \\
	&= 2 - 2 \exp\left(-\frac{\|x-x'\|^2_2}{\sigma^2}\right) \\
	&\le \frac{2}{\sigma^2}\|x-x'\|^2_2,
\end{align*}
where we used the fact that the function $u \mapsto 2u/\sigma^2 - 2 + 2e^{-u/\sigma^2}$ is nonnegative for $u \ge 0$.

We now check the validity of Assumptions~\ref{as:bounded_Z}--\ref{as:Lipschitz}.  Assumption~\ref{as:bounded_Z} holds as $\diam(\cZ) = \sqrt{\diam(\cX)^2+\diam(\cY)^2} \leq 2\sqrt{r_0^2 + B^2}$. The functions in $\cF$ are continuous, and Assumption~\ref{as:bounded_F} holds with $M = 2(r^2+B^2)$ by virtue of the first estimate of Lemma~\ref{lm:tech_rkhs}. To verify Assumption~\ref{as:Lipschitz}, we proceed as
\begin{align*}
	\left|f(x,y) - f(x',y')\right| &= \left|(y-f_0(x))^2 - (y'-f_0(x'))^2\right|\\
	&\leq \left|y + y' - f_0(x) - f_0(x')\right|\cdot\left|y - y' + f_0(x') - f_0(x)\right|\\
	&\leq 2(r + B)\cdot\left(|y-y'| + |\langle f_0, K_{x'} - K_x\rangle_K|\right)\\
	&\leq 2(r + B)\cdot\left(|y-y'| + r\cdot \|K_{x'} - K_x\|_K\right)\\
	&\leq 2(r + B)\cdot \left(|y-y'|+\frac{r\sqrt{2}}{\sigma}\|x-x'\|_2\right)\\
	&\leq 2(r + B)\cdot\left(1+r\sqrt{2}/\sigma\right)(|y - y'| + \|x - x'\|_2)\\
	&\leq 2\sqrt{2}(r + B)\cdot\left(1+r\sqrt{2}/\sigma\right)\sqrt{|y-y'|^2+\|x-x'\|^2_2},
\end{align*}
where the fourth inequality holds by the Lipschitz continuity of the Gaussian reproducing kernel, and the last inequality is Jensen's inequality. Hence, Assumption~\ref{as:Lipschitz} holds with $L = 2\sqrt{2}(r + B)\cdot\left(1+r\sqrt{2}/\sigma\right)$.

Now we proceed to upper-bound the Dudley entropy integral for $\cF$:
\begin{align*}
\int_0^\infty \sqrt{\log \cN\left(\cF, \|\cdot\|_{\infty}, u\right)} \d u 
&\leq \int_0^{2(r^2+Br)} \sqrt{\log \cN\left(I_K(\cB_r), \|\cdot\|_{\cX}, \frac{u}{2(r+B)}\right)} \d u\\
&\leq \underbrace{\int_0^{r^2 + B r} \sqrt{\log \cN\left(I_K(\cB_r), \|\cdot\|_{\cX}, \frac{u}{2(r+B)}\right)} \d u}_{:= T_1}\\
&\qquad+ \underbrace{\int_{r^2 + B r}^{2(r^2 + B r)} \sqrt{\log \cN\left(I_K(\cB_r), \|\cdot\|_{\cX}, \frac{r}{2}\right)} \d u}_{:= T_2}
\end{align*}
where we used the second claim of Lemma~\ref{lm:tech_rkhs} for the first inequality and the monotonicity of covering numbers for the second inequality. Plugging in the estimate from Proposition~\ref{prop:cover_gaussian}, we get
\begin{align*}
T_1 &\leq 2\sqrt{d}\left(32 + \frac{2560 d r^2_0}{\sigma^2}\right)^{\frac{d+1}{2}} (r^2 + Br) \Gamma\left(\frac{d+3}{2},\log 2\right)\\
T_2 &\leq \sqrt{d}\left(32 + \frac{2560 d r^2_0}{\sigma^2}\right)^{\frac{d+1}{2}} (r^2 + B r) (\log 2)^{\frac{d+1}{2}},
\end{align*}
and hence $T_1 + T_2 \leq \frac{C_1}{48}(r^2 + B r)$, where the constant $C_1$ is
\begin{align*}
	C_1 &= 48 \sqrt{d}\left(2\Gamma\left(\frac{d+3}{2}, \log 2\right) + (\log2)^{\frac{d+1}{2}}\right)\left(32 + \frac{2560 d r^2_0}{\sigma^2}\right)^{\frac{d+1}{2}},
\end{align*}
\section{Rademacher complexity of $\Phi$}\label{ssec:tech}

\begin{lemma}\label{lm:PhiRad} The expected Rademacher complexity of the function class $\Phi$ satisfies
	\begin{align*}
		\Rad_n(\Phi) \le \frac{24}{\sqrt{n}} \comp(\cF) + \frac{12C_0(2\,\diam(\cZ))^p}{\sqrt{n}} \left(1+\left(\frac{\diam(\cZ)}{\eps}\right)^p\right).
	\end{align*}
\end{lemma}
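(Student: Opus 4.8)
The plan is to bound $\Rad_n(\Phi)$ by Dudley's entropy integral, reducing the covering numbers of $\Phi=\{\varphi_{\lambda,f}:\lambda\in\Lambda,\ f\in\cF\}$ — where $\Lambda=[0,\,C_0 2^{p-1}(1+(\diam(\cZ)/\eps)^p)]$ is the interval supplied by Lemma~\ref{lm:lambda_UB} — to the covering numbers of $\cF$ together with those of the one-dimensional interval $\Lambda$. Exactly as in the proof of Theorem~\ref{thm:data_dep}, conditionally on $Z_1,\dots,Z_n$ the process $\varphi\mapsto\frac1n\sum_{i=1}^n\varepsilon_i\varphi(Z_i)$ is zero-mean and, by Hoeffding's lemma (using that every $\varphi\in\Phi$ is $[0,M]$-valued, since $0\le f\le M$ forces $f(z)\le\varphi_{\lambda,f}(z)\le\sup_{z'}f(z')\le M$), subgaussian with respect to the metric $\frac1{\sqrt n}\|\cdot\|_\infty$. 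Hence Dudley's inequality yields $\Rad_n(\Phi)\le\frac{12}{\sqrt n}\int_0^\infty\sqrt{\log\cN(\Phi,\|\cdot\|_\infty,u)}\,\d u$.

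The key structural step is a product-Lipschitz estimate for the map $(\lambda,f)\mapsto\varphi_{\lambda,f}$. For fixed $\lambda$, $\|\varphi_{\lambda,f}-\varphi_{\lambda,g}\|_\infty\le\|f-g\|_\infty$, since for each $z$ the passage from a function to its supremum over $z'$ is $1$-Lipschitz in the uniform norm. For fixed $f$ and $\lambda'\ge\lambda$, applying $\sup_{z'}(g-h)\ge\sup_{z'}g-\sup_{z'}h$ with $g(z')=f(z')-\lambda d_\cZ^p(z,z')$ and $h(z')=(\lambda'-\lambda)d_\cZ^p(z,z')$ shows $0\le\varphi_{\lambda,f}(z)-\varphi_{\lambda',f}(z)\le(\lambda'-\lambda)\diam(\cZ)^p$, so $\|\varphi_{\lambda,f}-\varphi_{\lambda',f}\|_\infty\le\diam(\cZ)^p|\lambda-\lambda'|$. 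Combining the two, a $\frac u2$-cover of $\cF$ in $\|\cdot\|_\infty$ and a $\frac{u}{2\diam(\cZ)^p}$-cover of $\Lambda$ together give a $u$-cover of $\Phi$, whence $\log\cN(\Phi,\|\cdot\|_\infty,u)\le\log\cN(\cF,\|\cdot\|_\infty,u/2)+\log\cN(\Lambda,|\cdot|,u/(2\diam(\cZ)^p))$.

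Inserting this into the entropy integral and using $\sqrt{a+b}\le\sqrt a+\sqrt b$ splits the bound into two parts. The $\cF$-part, after the substitution $u=2v$, equals $\frac{12}{\sqrt n}\cdot2\comp(\cF)$, which is the first term of the claim. For the $\Lambda$-part, write $L\deq C_0 2^{p-1}(1+(\diam(\cZ)/\eps)^p)$; the elementary bound $\cN([0,L],|\cdot|,v)\le\max\{1,L/v\}$ (covering numbers are $1$ past $v=L$) and the substitution $u=2\diam(\cZ)^p v$ give this part as at most $\frac{12}{\sqrt n}\cdot2\diam(\cZ)^p\int_0^L\sqrt{\log(L/v)}\,\d v=\frac{12}{\sqrt n}\cdot2\diam(\cZ)^p\,L\int_0^1\sqrt{\log(1/w)}\,\d w$. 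Since $\int_0^1\sqrt{\log(1/w)}\,\d w=\Gamma(3/2)=\sqrt\pi/2$ and $2\diam(\cZ)^p\cdot 2^{p-1}=(2\diam(\cZ))^p$, the $\Lambda$-part is at most $\frac{6\sqrt\pi\,C_0(2\diam(\cZ))^p}{\sqrt n}(1+(\diam(\cZ)/\eps)^p)$, and $6\sqrt\pi<12$ delivers the stated second term.

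There is no serious obstacle here; the only points needing care are (i) the one-sided Lipschitz bound of $\varphi_{\lambda,f}$ in $\lambda$ via the $\sup(g-h)\ge\sup g-\sup h$ trick, and (ii) tracking constants so that the $\Gamma(3/2)=\sqrt\pi/2$ factor coming out of the Gaussian integral stays below the claimed numerical constant $12$. If one prefers to avoid the crude bound $\max\{1,L/v\}$, one may instead use $\cN([0,L],|\cdot|,v)\le\lceil L/(2v)\rceil$ and truncate the integral at $v=L$, with the same conclusion.
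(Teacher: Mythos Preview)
Your proof is correct and follows essentially the same approach as the paper: both bound $\Rad_n(\Phi)$ via Dudley's entropy integral after establishing the product Lipschitz estimate $\|\varphi_{\lambda,f}-\varphi_{\lambda',f'}\|_\infty\le\|f-f'\|_\infty+\diam(\cZ)^p|\lambda-\lambda'|$ (the paper packages this as a pseudometric $d_\Phi$ on $\Phi$), then split the covering number as $\cN(\cF,\|\cdot\|_\infty,u/2)\cdot\cN(\Lambda,|\cdot|,u/(2\diam(\cZ)^p))$. The only cosmetic difference is in evaluating the interval part: the paper truncates at $w=1/2$ and uses $\int_0^{1/2}\sqrt{\log(1/w)}\,\d w=\tfrac12(\sqrt{\log 2}+\sqrt{\pi}\,\mathrm{erfc}(\sqrt{\log 2}))<1$, while you integrate over $[0,1]$ and use $\Gamma(3/2)=\sqrt{\pi}/2$; both constants sit below the $12$ in the stated bound.
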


\begin{proof}[Proof of Lemma~\ref{lm:PhiRad}] Define the $\Phi$-indexed process $X = (X_\varphi)_{\varphi \in \Phi}$ via
	\begin{align*}
		X_\varphi \deq \frac{1}{\sqrt{n}}\sum^n_{i=1} \varepsilon_i \varphi(Z_i),
	\end{align*}
which is clearly zero-mean: $\E[X_\varphi] = 0$ for all $\varphi \in \Phi$. To upper-bound the Rademacher average $\Rad_n(\Phi)$, we first show that $X$ is a subgaussian process with respect to a suitable pseudometric. For $\varphi = \varphi_{\lambda,f}$ and $\varphi' = \varphi_{\lambda',f'}$, define
\begin{align*}
	d_\Phi(\varphi,\varphi') \deq \| f - f' \|_\infty + (\diam(\cZ))^p |\lambda - \lambda'|,
\end{align*}
and it is not hard to show that $\| \varphi - \varphi' \|_\infty \le d_\Phi(\varphi,\varphi')$. Then, for any $t \in \Reals$, using Hoeffding's lemma and the fact that $(\varepsilon_i,Z_i)$ are i.i.d., we arrive at
\begin{align*}
	\E\left[\exp(t(X_\varphi-X_{\varphi'}))\right] &= \E\left[\exp\left(\frac{t}{\sqrt{n}}\sum^n_{i=1} \varepsilon_i (\varphi(Z_i)-\varphi'(Z_i))\right)\right] \\
	&= \left(\E\left[\exp\left(\frac{t}{\sqrt{n}}\varepsilon_1 \left(\varphi(Z_1)-\varphi'(Z_1)\right)\right)\right]\right)^n \\
	&\le \exp\left(\frac{t^2 d^2_\Phi(\varphi,\varphi')}{2}\right).
\end{align*}
Hence, $X$ is subgaussian with respect to $d_\Phi$, and therefore the Rademacher average $\Rad_n(\Phi)$ can be upper-bounded by the Dudley entropy integral \cite{talagrand}:
\begin{align*}
	\Rad_n(\Phi) \le \frac{12}{\sqrt{n}}\int^\infty_0 \sqrt{\log \cN(\Phi,d_\Phi,u)}\d u,
\end{align*}
where $\cN(\Phi,d_\Phi,\cdot)$ are the covering numbers of $(\Phi,d_\Phi)$. From the definition of $d_\Phi$, it follows that
\begin{align*}
	\cN(\Phi,d_\Phi,u) \le \cN(\cF,\|\cdot\|_\infty,u/2) \cdot \cN(\Lambda,|\cdot|,u/2(\diam(\cZ))^p),
\end{align*}
and therefore
\begin{align*}
	\Rad_n(\Phi) \le \frac{12}{\sqrt{n}}\left(\int^\infty_0 \sqrt{\log \cN(\cF,\|\cdot\|_\infty,u/2)}\d u + \int^\infty_0 \sqrt{\log \cN(\Lambda,|\cdot|,u/2(\diam(\cZ))^p)}\d u\right).
\end{align*}
Since $\Lambda$ is a compact interval, it is straightforward to upper-bound the second integral:
\begin{align*}
	\int^\infty_0 \sqrt{\log \cN(\Lambda,|\cdot|,u/2(\diam(\cZ))^p)}\d u &\le 2|\Lambda|(\diam(\cZ))^p\int^{1/2}_0 \sqrt{\log(1/u)}\d u \\
	&= 2c|\Lambda|(\diam(\cZ))^p,
\end{align*}
where $|\Lambda|=C_02^{p-1}(1+(\diam(\cZ)/\eps)^p)$ is the length of the interval $\Lambda$ and the constant $c = \frac{1}{2}\left(\sqrt{\log 2} + \sqrt{\pi}\cdot\mathrm{erfc}(\sqrt{\log 2})\right) < 1$. Consequently,
\begin{align*}
	\Rad_n(\Phi) &\le \frac{12}{\sqrt{n}}\left(\int^\infty_0 \sqrt{\log \cN(\cF,\|\cdot\|_\infty,u/2)}\d u + 2|\Lambda|(\diam(\cZ))^p\right) \\
	&\le \frac{24}{\sqrt{n}} \comp(\cF) + \frac{12C_0(2\,\diam(\cZ))^p}{\sqrt{n}} \left(1+\left(\frac{\diam(\cZ)}{\eps}\right)^p\right).
\end{align*}
\end{proof}

\section{Proofs for Section~\ref{sec:DA}}
\label{ap:DA}

\subsection{Proof of Lemma~\ref{lm:courty}}

First we prove that $W_p(P,Q) \leq W_p(\mu,\nu)$. Define the mapping $\tilde{T}: \cZ \to \cZ$ by $\tilde{T} \deq T \otimes \text{id}_\cY$, i.e., $\tilde{T}(z)=\tilde{T}(x,y)=(T(x),y)$, and let $\tilde{Q} = \tilde{T}_\# P$, the pushforward of $P$ by $\tilde{T}$. We claim that $\tilde{Q} \equiv Q$. Indeed, for any measurable sets $A \subseteq \cX$ and $B \subseteq \cY$,
\begin{align*}
	\tilde{Q}(A \times B) &= \tilde{T}_\# P(A \times B) \\
	&= P(T^{-1}(A) \times B) \\
	&= \int_{T^{-1}(A)} \mu(\d x) P_{Y|X}(B|x) \\
	&= \int_A T_\# \mu(\d x) P_{Y|X}(B|T(x)) \\
	&= \int_A \nu(\d x)Q_{Y|X}(B|x),
\end{align*}
where we have used the relation \eqref{eq:courty} and the invertibility of $T$. Thus,
\begin{align*}
	W^p_p(P,Q) \leq \E_P[d^p_\cZ(Z,\tilde{T}(Z)))] = \E_P[d^p_\cX(X,T(X))] = W^p_p(\mu,\nu).
\end{align*}
For the reverse inequality, let $M \in \cP(\cZ \times \cZ)$ be the optimal coupling of $P$ and $Q$. Then, for $Z=(X,Y)$ and $Z'=(X',Y')$ with $(Z,Z')\sim M$, the marginal $M_{XX'}$ is evidently a coupling of the marginals $\mu$ and $\nu$, and therefore
\begin{align*}
	W^p_p(P,Q) &= \E_M[d^p_\cZ(Z,Z')] \\
	&= \E_M[d^p_\cX(X,X')]+\E_M[d^p_\cY(Y,Y')] \\
	&\ge \E_M[d^p_\cX(X,X')] \\
	&\ge W^p_p(\mu,\nu).
\end{align*}

\subsection{Proof of Theorem~\ref{thm:courty}}
For simplicity, we assume that there exists a hypothesis $f^* \in \cF$ that achieves $R^*(Q,\cF)$. Then, for any $\varrho>0$ such that $W_p(P,Q) \leq \varrho$, Proposition~\ref{prop:Lipschitz} implies that
\begin{align*}
R(Q,\wh{f}) - R(Q,f^*) &\leq R_{\varrho,p}(P,\widehat{f}) - R_{\varrho,p}(P,f^*) + 2L\eps\\
&\leq R_{\varrho,p}(P,\widehat{f}) - R^*_{\eps,p}(P,\cF) + 2L\eps.
\end{align*}
From Theorem~\ref{thm:lipschitz}, we know that
\begin{align*}
R_{\varrho,p}(P,\widehat{f}) - R_{\varrho,p}^*(P,\cF) \leq \frac{48\comp(\cF)}{\sqrt{n}} + \frac{48L\diam^p(\cZ)}{\sqrt{n}\varrho^{p-1}} + \frac{3M \sqrt{\log(4/\delta)}}{\sqrt{2n}}
\end{align*}
holds with probability at least $1-\delta/2$. Thus, it remains to find the right $\varrho$, such that that $W_p(P,Q) \leq \varrho$ holds with high probability. From Proposition~\ref{prop:fournier}, we see that each of the following two statements holds with probability at least $1-\delta/4$:
\begin{align*}
W_p(\mu_n,\mu) \leq \left(\frac{\log(4C_a/\delta)}{C_b n}\right)^{p/d}, \qquad
W_p(\nu_m,\nu) \leq \left(\frac{\log(4C_a/\delta)}{C_b m}\right)^{p/d}.
\end{align*}
Since $W_p(P,Q)=W_p(\mu,\nu)$ by Lemma~\ref{lm:courty}, we see that $W_p(P,Q) \le \wh{\varrho}(\delta)$ with probability at least $1-\delta/2$, where $\wh{\varrho}(\delta)$ is given by Eq.~\eqref{eq:DA_radius}. The claim of the theorem follows from the union bound.

\end{document}